\theoremstyle{plain}
\newtheorem{theorem}{Theorem}[section]
\newtheorem{proposition}[theorem]{Proposition}
\theoremstyle{definition}
\theoremstyle{remark}
\begin{document}

\twocolumn[
\icmltitle{ParsNets: A Parsimonious Orthogonal and Low-Rank Linear Networks for Zero-Shot Learning}

% It is OKAY to include author information, even for blind
% submissions: the style file will automatically remove it for you
% unless you've provided the [accepted] option to the icml2024
% package.

% List of affiliations: The first argument should be a (short)
% identifier you will use later to specify author affiliations
% Academic affiliations should list Department, University, City, Region, Country
% Industry affiliations should list Company, City, Region, Country

% You can specify symbols, otherwise they are numbered in order.
% Ideally, you should not use this facility. Affiliations will be numbered
% in order of appearance and this is the preferred way.
%\icmlsetsymbol{equal}{*}

\begin{icmlauthorlist}
\icmlauthor{Jingcai Guo}{polyu,polyu-sz}
\icmlauthor{Qihua Zhou}{polyu}
\icmlauthor{Ruibing Li}{polyu}
\icmlauthor{Xiaocheng Lu}{hkust}
\icmlauthor{Ziming Liu}{polyu}
\icmlauthor{Junyang Chen}{polyu}
\icmlauthor{Xin Xie}{tju}
\icmlauthor{Jie Zhang}{polyu}
\end{icmlauthorlist}

\icmlaffiliation{polyu}{Department of Computing, The Hong Kong Polytechnic University}
\icmlaffiliation{polyu-sz}{The Hong Kong Polytechnic University Shenzhen Research Institute}
\icmlaffiliation{hkust}{Department of Computer Science and Engineering, The Hong Kong University of Science and Technology}
\icmlaffiliation{tju}{School of Computer Science and Technology, Tianjin University}

\icmlcorrespondingauthor{\textbf{Jingcai Guo}}{\textbf{jc-jingcai.guo@polyu.edu.hk}}
%\icmlcorrespondingauthor{Firstname2 Lastname2}{first2.last2@www.uk}

% You may provide any keywords that you
% find helpful for describing your paper; these are used to populate
% the "keywords" metadata in the PDF but will not be shown in the document
%\icmlkeywords{Machine Learning, ICML}

\vskip 0.3in
]

% this must go after the closing bracket ] following \twocolumn[ ...

% This command actually creates the footnote in the first column
% listing the affiliations and the copyright notice.
% The command takes one argument, which is text to display at the start of the footnote.
% The \icmlEqualContribution command is standard text for equal contribution.
% Remove it (just {}) if you do not need this facility.

\printAffiliationsAndNotice{}  % leave blank if no need to mention equal contribution
%\printAffiliationsAndNotice{\icmlEqualContribution} % otherwise use the standard text.

\begin{abstract}
This paper provides a novel parsimonious yet efficient design for zero-shot learning (ZSL), dubbed \textit{ParsNets}, where we are interested in learning a composition of \textit{on-device friendly} linear networks, each with orthogonality and low-rankness properties, to achieve equivalent or even better performance against existing deep models. Concretely, we first refactor the core module of ZSL, i.e., visual-semantics mapping function, into several base linear networks that correspond to diverse components of the semantic space, where the complex nonlinearity can be collapsed into simple local linearities. Then, to facilitate the generalization of local linearities, we construct a maximal margin geometry on the learned features by enforcing low-rank constraints on intra-class samples and high-rank constraints on inter-class samples, resulting in orthogonal subspaces for different classes and each subspace lies on a compact manifold.
%
%Moreover, 
To enhance the model's adaptability and counterbalance over/under-fittings in ZSL, a set of sample-wise indicators is employed to select a sparse subset from these base linear networks to form a composite semantic predictor for each sample. 
Notably, maximal margin geometry can guarantee the diversity of features, and meanwhile, local linearities guarantee efficiency. Thus, our \textit{ParsNets} can generalize better to unseen classes and can be deployed flexibly on resource-constrained devices. 
Theoretical explanations and extensive experiments are conducted to verify the effectiveness of the proposed method.
\end{abstract}

\begin{figure}[h]
%\vspace{-3.5mm}
\centering
\subfigure[2D Subspace Map]{\label{m3-original}
\includegraphics[width=1.5in]{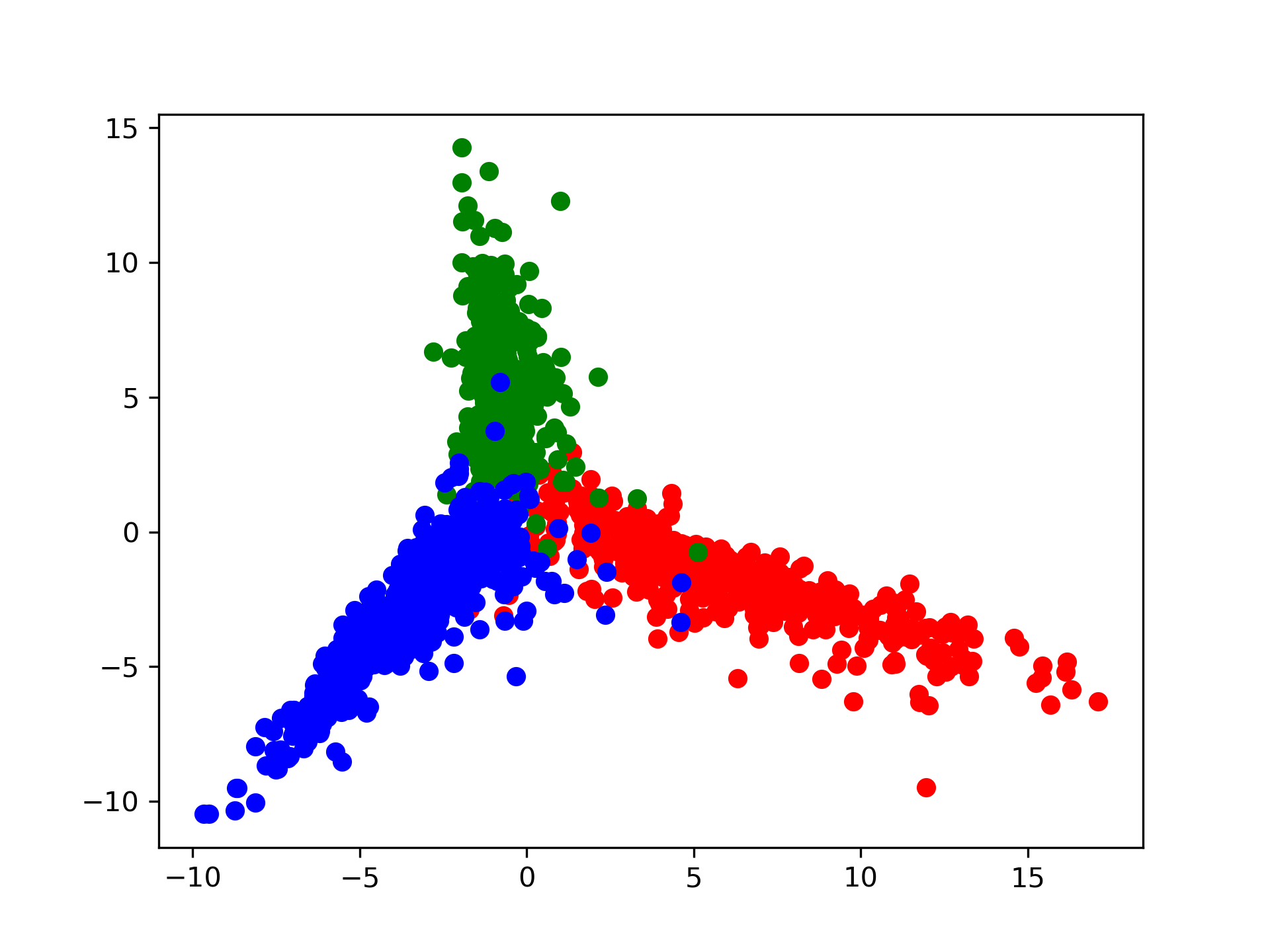}}
\subfigure[3D Subspace Map]{\label{m3-mapped}
\includegraphics[width=1.5in]{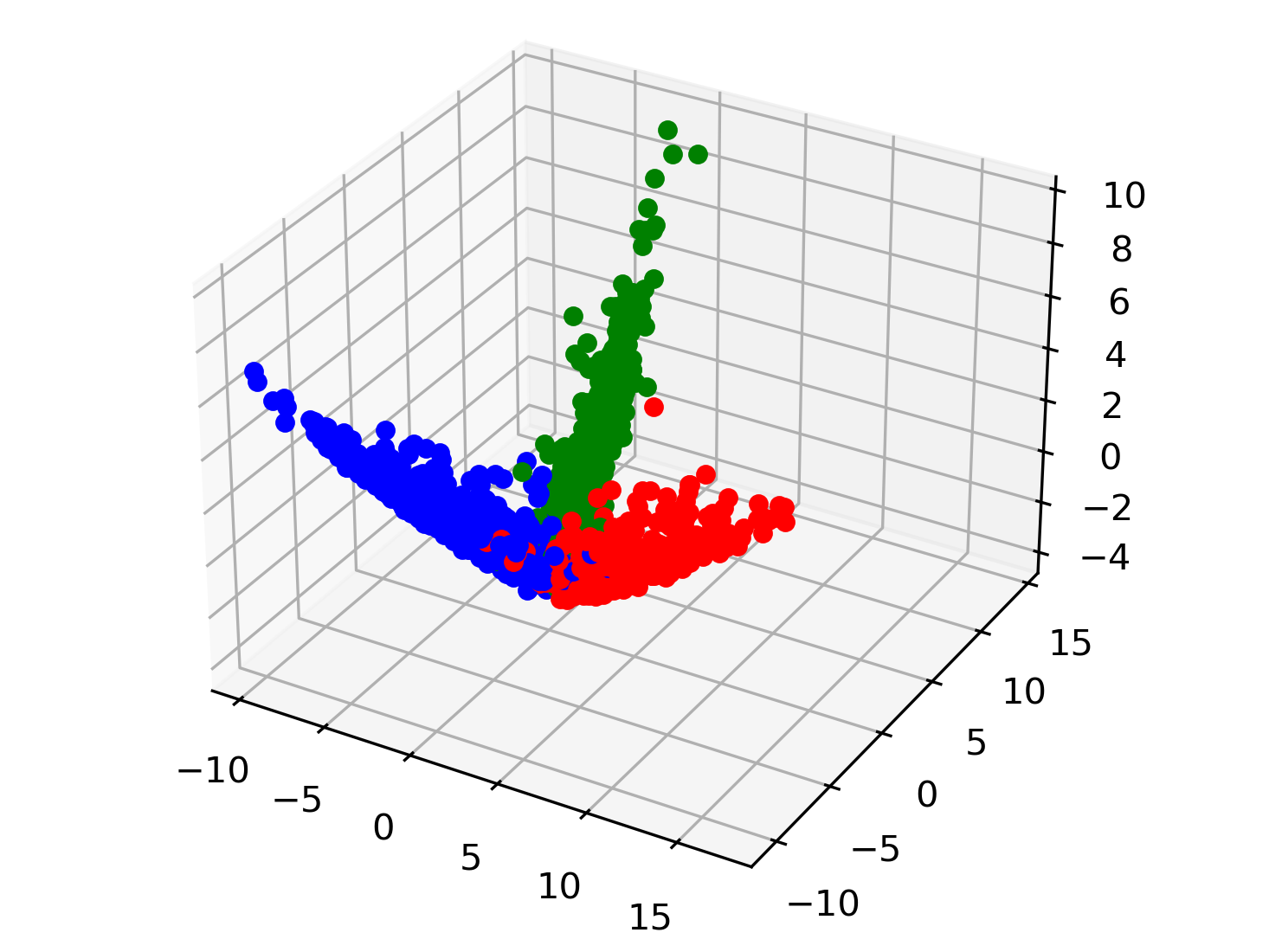}}
%\vspace{-2mm}
\caption{Subspace visualization of AWA2~\cite{xian2018zero} on three random classes. The orthogonality indicates the maximum separability.} 
%\vspace{-3.5mm}
\end{figure}

\section{Introduction}
%1. descript ZSL
%Zero-shot learning (ZSL) has received increasing attention recently for its imitation ability of human-like knowledge transfer to recognize unseen classes without having to observe any real samples before, i.e., zero-shot training samples of unseen classes~\cite{kodirov2017semantic,zhu2019semantic,chen2021semantics,khan2023learning}. 
Zero-shot learning (ZSL) has received increasing attention for its imitation ability of human-like knowledge transfer to recognize unseen classes without having to observe any real sample before~\cite{chen2021semantics,khan2023learning}.
Such recognition is typically achieved by training labeled seen class samples combined with a set of shared semantic descriptors spanning both seen and unseen classes, and generalizing the trained model to recognize samples from unseen classes. 
%
%In practice, 
The shared semantic descriptors are usually implemented by simple semantic attributes~\cite{lampert2013attribute} or word vectors~\cite{pennington2014glove} that contain class-wise high-level information for each class. Therefore, a natural and widely adopted ZSL solution is to map a sample from its original feature space, e.g., visual space w.r.t. images, to the shared semantic space to construct a visual-semantics mapping function, wherein, the mapped semantic representation is calculated with those descriptors to search for one matched class that has the highest compatibility with the sample.
%, whose corresponding class is then assigned to the sample. 
%attributes~\cite{lampert2013attribute} or word vectors from GloVe~\cite{pennington2014glove}.
%Notably, classic ZSL only infers to unseen classes during testing, while generalized ZSL is expected to accurately classify samples from both seen and unseen classes. 
Notably, based on the search scope, the classic ZSL and generalized ZSL (GZSL) are further defined, where the former infers only unseen classes, while the latter can also classify novel samples of seen classes. 

% \begin{figure}[t]
%   \centering
% \centerline{\includegraphics[width=0.3\textwidth]{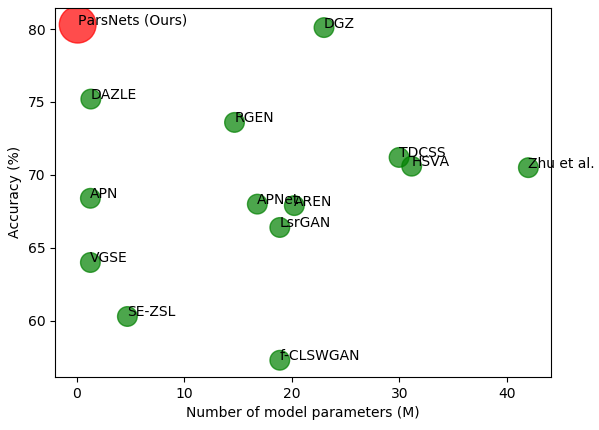}}
%   \caption{Model Complexity v.s. ZSL Accuracy}
%   \label{fig1}
% %\vspace{-5mm}
% \end{figure}

%2. from domain bias -> over/under-fittings -> better representation
%3. from large pre-train -> costly -> lightweight 
In ZSL/GZSL, since the trained model has no observation of any sample from unseen classes, the mapping function is inherently biased towards seen classes~\cite{fu2015transductive,guo2023graph}, i.e., the mapped features are usually overfitted to clusters near seen classes, hence can hardly infer to unseen classes with satisfactory performance. 
%https://ieeexplore.ieee.org/stamp/stamp.jsp?tp=&arnumber=9832795
To relieve the domain-biased overfitting, especially for GZSL, existing methods usually resorted to learning more representative features having less gap between seen/unseen classes. Some widely used approaches include 1) visual-semantics alignment, for smooth knowledge transfer~\cite{schonfeld2019generalized,guo2020novel}; 2) generative methods (i.e., synthesize mimic samples of unseen classes), for training a holistic model~\cite{huang2019generative,zhao2022boosting}; and 3) fine-grained methods, for extracting more generalizable features~\cite{huynh2020fine,guo2023graph}. 
%

% For instance, some methods tried to align the visual and semantic features within the visual-semantics mapping function to enable smooth knowledge transferring~\cite{zhang2015zero,schonfeld2019generalized,guo2020novel}. 
% %between seen/unseen classes~\cite{zhang2015zero,schonfeld2019generalized,guo2020novel}. 
% %
% Differently, some other methods proposed to synthesize mimical samples conditioned on unseen class semantic descriptors and jointly train a holistic model combined with seen class samples~\cite{huang2019generative,zhao2022boosting,feng2022non}. 
% %
% In contrast, some recent methods focused on fine-grained elements or key points in samples to extract more generalizable features between seen/unseen classes~\cite{xie2019attentive,huynh2020fine,guo2023graph}. 
%

Despite certain relief from the domain bias, we observe that existing methods, in turn, may inevitably incur the underfitting phenomenon on their trained models. 
For example, most GZSL models usually tend to obtain higher Harmonic Mean metrics~\cite{xian2017zero} of the test accuracy. In other words, the recognition tasks for seen and unseen classes can suppress each other, yielding two mediocre results across seen/unseen classes. 
Hence, the training of ZSL/GZSL can easily fluctuate between over/under-fittings and degrade the model's generalization ability. 
%easily suffer from the fluctuation between \textit{overfitting} and \textit{underfitting} and degrade the performance dramatically.
%

Moreover, it can be noted that existing methods mostly rely on a series of complex deep models to extract and fuse comprehensive features for superior recognition~\cite{xie2019attentive,huynh2020fine,wang2018zero,guo2023graph}. 
As a result, such training and deployment can be costly in terms of both computing and memory overhead. 
In this regard, we make an assumption that ZSL/GZSL can be more favorable to a scenario associated with resource-constrained devices due to the low or even zero data requirements. Such a scenario can also well align with ubiquitous devices and data in real-world applications. 
However, as far as we know, nearly no research has investigated lightweight ZSL/GZSL models deployed on resource-constrained devices. 

In this paper, we suggest that the above generalization and lightweight requirements can be jointly achieved by our properly designed parsimonious-yet-efficient network refactoring framework, namely, \textit{ParsNets}. 
Specifically, we utilize a set of base linear networks to estimate the nonlinear visual-semantics mapping function that usually involves complex deep models, wherein, each base linear network can correspond to different components of the semantic space shared by both seen and unseen classes. 
To encourage the learned features to be most discriminative from each other and generalizable to novel concepts, we enforce a low-rank structure to the features of data samples from the same class and a high-rank structure to the features of data samples from all different classes. Hence, intra-class samples can reside in the same linear subspace, and meanwhile, inter-class subspaces can be orthogonal from each other. Such constructed maximal margin geometry is expected to facilitate a smooth knowledge transfer between seen and unseen classes since no entanglement exists. 
Moreover, to further encourage the model's adaptability and counterbalance over/under-fittings, we employ a set of sample-wise indicators to select a sparse subset of these base linear networks to form a composite predictor for each sample, thus the global nonlinearity can be collapsed into sparse local linearities to further reduce the computing complexity. 
%
%Our framework is on-device friendly and can better generalize to unseen classes.

In summary, our contributions are three-fold:
%In summary, our contributions are four-fold:
\begin{itemize}
\item We propose \textit{ParsNets}, which is the first work that provides a parsimonious and on-device-friendly framework for ZSL/GZSL by refactoring the nonlinear large network into a composition of simple local linear networks. %which can be deployed flexibly on resource-constrained devices.
\item We enforce maximal margin geometry on the learned features to maximize the model's discrimination and generalization ability, thus enabling a smooth knowledge transfer between seen and unseen classes.
\item We provide detailed theoretical explanations on the rationality and implementation guarantee of \textit{ParsNets}, which indicates its feasibility.
%\item Experimental results on ZSL/GZSL tasks demonstrate the effectiveness of the proposed method.
\end{itemize}

% In this paper, we propose to refactor the mapping function into several orthogonal \textit{base linear networks} that correspond to diverse components of the semantic space along with low-rank constraints on intra-class features and high-rank constraints on inter-class features. To enhance the model's adaptability and counterbalance over/under-fittings, a set of sample-wise indicators are employed to select a \textit{sparse subset} from these base linear networks to form a \textit{composite predictor} for each sample.
% %
% Notably, orthogonality and low-rankness can guarantee the diversity of features, and meanwhile, linearity guarantees efficiency. Thus, our model generalizes better to unseen classes and can be deployed flexibly on resource-constrained devices. 
% %
% Theoretical explanations and extensive experiments are conducted to verify the effectiveness of our method.
\section{Related Work}

\subsection{Visual-Semantics Mapping in ZSL/GZSL}
Existing ZSL/GZSL methods adopt three approaches to construct the visual-semantics mapping, including forward, reverse, and intermediate functions. 
Among them, forward mapping is the mainstream that maps samples from their visual features to the semantic space and computes their compatibilities with class-level semantic descriptions~\cite{akata2015evaluation,schonfeld2019generalized}. 
Conversely, reverse mapping suggests that projecting semantic features into the visual space may decrease the feature variances~\cite{zhang2017learning}. 
Diverging from direct mappings, intermediate functions explore metric networks to compute compatibilities of paired input visual and semantic features in an intermediate space~\cite{sung2018learning}. 
However, we can note that the above methods are mostly built upon complex deep models which are all innately computationally and memory-intensive frameworks.
%There still a 
%Existing state-of-the-art ZSL methods are mostly built upon large pre-trained networks (e.g., various vision-language models) combined with comprehensive feature extraction units (e.g., attention or graph embedding) to construct a visual-semantics mapping, wherein seen and unseen classes reside in a shared space. 
%
%Existing ZSL methods are mostly built upon large networks with comprehensive feature extraction units to train a visual-semantics mapping function. 
%wherein seen/unseen classes share the same set of auxiliary knowledge (e.g., semantic descriptors). 
%
%However, deep models are innately computationally and memory-intensive, and worse still, 
%require a large overhead in terms of both computation and memory.
%are computationally intensive and not on-device friendly, 
%
\noindent[\textbf{This paper}]: To align with most methods, we follow the forward mapping approach to implement the visual-semantics mapping function, and moreover, construct the followed network refactoring to achieve a parsimonious-yet-efficient ZSL framework.

\subsection{Domain-Bias Problem}
%Given the nature that the training (seen) and testing (unseen) classes are disjoint, domain bias can be an inherent issue in ZSL when transferring learned knowledge between seen/unseen classes~\cite{fu2015transductive}. 
%
In recent years, extensive efforts have been made to relieve the domain bias caused by the disjoint train (seen) and test (unseen) classes. 
For example, some methods tried to align the visual and semantic features within the visual-semantics mapping function to enable a smooth knowledge transfer~\cite{zhang2015zero,schonfeld2019generalized,guo2020novel}. 
Differently, some other methods proposed to synthesize mimical samples conditioned on unseen class semantic descriptors and jointly train a holistic model combined with seen class samples~\cite{huang2019generative,zhao2022boosting,feng2022non}. 
In contrast, some recent methods focused on fine-grained elements or key points within samples to extract more generalizable features between seen/unseen classes~\cite{xie2019attentive,huynh2020fine,guo2023graph}. 
%
%Despite some relief from the problem of domain-biased overfitting, we observe that underfitting can also arise and fluctuate the training towards mediocre convergence for both seen and unseen classes.
Despite the relief from the domain-biased overfitting, we observe that underfitting can arise and fluctuate the training towards mediocre convergence for both seen/unseen classes.
\noindent[\textbf{This paper}]: We construct the maximal margin geometry along with the network refactoring to encourage the discrimination of learned features, and utilize sample-wise indicators to enable a sparse composition of base linear networks. 
Such regularization can facilitate the generalization towards unseen classes, and explore a better balance between over/under-fittings.

\section{Methodology}

%\subsection{Problem Definition}
% \subsection{Preliminaries}
% \begin{figure*}[t]
%   \centering
% \centerline{\includegraphics[width=0.77\textwidth]{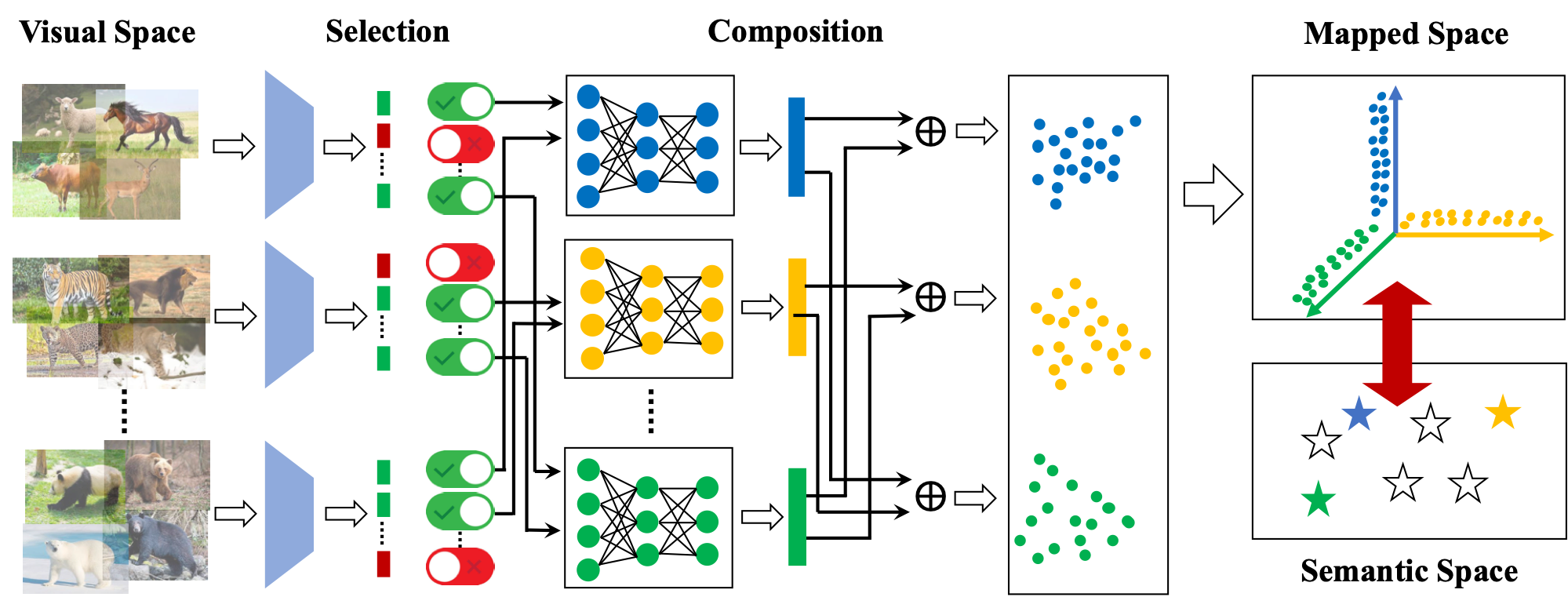}}
%   \caption{A.}
%   \label{fig2}
% %\vspace{-5mm}
% \end{figure*}

%\subsection{Problem Definition}
\subsection{Preliminaries}
Given a dataset from seen domain $\mathcal{D^{S}}=\left \{ x_{i}, a_{y_{i}}, y_{i} \right \}_{i=1}^{N}$ that contains $N$ labeled samples $x_{i} \in \mathcal{X^{S}}$ with seen class labels $y_{i} \in \mathcal{Y^{S}}$. 
The task of GZSL is to construct a model trained on $\mathcal{D^{S}}$ while can also generalize well to unseen domain $\mathcal{D^{U}}=\left \{ \left (x^{u}, a_{y^{u}}, y^{u}\right ) \mid x^{u} \in \mathcal{X^{U}}, y^{u} \in \mathcal{Y^{U}}\right \}$, where $\mathcal{X^{S}} \cap \mathcal{X^{U}} = \phi$ and $\mathcal{Y^{S}} \cap \mathcal{Y^{U}} = \phi$. 
To achieve this goal, a set of shared per-class semantic descriptors $\mathcal{A} = \mathcal{A^{S}} \cup \mathcal{A^{U}}$ is further specified for seen classes ($a_{y} \in \mathcal{A^{S}}$) and unseen classes ($a_{y^{u}} \in \mathcal{A^{U}}$), respectively, which are widely implemented by attributes or word vectors.
Thus, the model can be formalized into training a parameterized mapping function $f:\mathcal{X} \to \mathcal{A}$ that maps a sample $x$ from its original feature space $\mathcal{X}$, e.g., visual space w.r.t. images, to the shared semantic space $\mathcal{A}$ as:
\begin{equation}
\mathop{\arg \min}_{\mathbf{\Theta}} \ \frac{1}{N} \sum_{i=1}^{N} \mathcal{L} \left ( f \left ( x_{i}; \mathbf{\Theta} \right ), a_{y_{i}} \right ) + \varphi \left ( \mathbf{\Theta} \right ),
\label{vs-mapping}
\end{equation}
where $\mathcal{L}\left ( \cdot  \right )$ minimizes the variance between mapped semantic features and ground-truth descriptors and $\varphi \left ( \cdot \right )$ regularizes the network weight $\mathbf{\Theta}$, if needed.
During inference, given a test sample $x^{t}$, the recognition can be described as:
\begin{equation}
\mathop{\arg\max}_{j} \ \Lambda \left ( f\left (x_{t}; \mathbf{\Theta} \right ), a_{j} \mid \mathcal{A} \right ),
\end{equation}
where $\Lambda \left (\cdot \right )$ is a similarity metric that searches the most closely related descriptor from $\mathcal{A}$, whose class is then assigned to the sample. 
Notably, if the search scope is restricted to $\mathcal{A^{U}}$, the recognition becomes the classic ZSL.
% \begin{align}
% &\mathop{\arg\max}_{j} \ \Lambda \left ( f\left ( \psi \left ( x_{t} \right ); \mathbf{W} \right ), {{P}'}^{(j)} \right ), \\
% &\mathop{\arg\max}_{j} \ \Lambda \left ( f\left ( \psi \left ( x_{t} \right ); \mathbf{W} \right ), \left \{ {{P}' \cup P} \right \}^{(j)} \right ), 
%\end{align}
% where Eq. (2) works for the conventional ZSL task where the similarity search is limited to unseen classes, and Eq. (3) is for the generalized ZSL task where the search can generalize to new samples from seen classes as well.

\subsection{Adaptive Composition of Linear Networks}
To break the dependence on complex deep models, one possible solution is to use local linearities to estimate the global nonlinearity~\cite{liu2018unified,li2020field}. 
In this paper, we apply a similar strategy to construct the visual-semantics mapping function. Concretely, we assume that a complex nonlinearly separable space can have sufficient linearly separable subspaces, and as such, we refactor the deep function $f \left ( \cdot; \mathbf{\Theta}  \right )$ into several base linear networks as:
%derived from the assumption that a complex nonlinearly separable space can have sufficient linearly separable subspaces~\cite{liu2018unified,li2020field}. 
%
%In this paper, we apply a similar strategy to construct the visual-semantics mapping function, that is, refactoring the deep function $f \left ( \cdot; \mathbf{\Theta}  \right )$ into several base linear networks, i.e., 
%Suppose the deep mapping function $f \left ( \cdot; \mathbf{W}  \right )$ is refactored into $K$ base linear networks as:
\begin{equation}
f(x) = \sum_{i=1}^{K} \left ( \Theta_{i}^\mathsf{T}x + b_{i}\right ) \cdot  \xi_{i} (x) + b, 
\label{fx}
\end{equation}
where $\Theta_{i}^\mathsf{T}x + b_{i}$ ($i=1, 2, \dots, K$) are a set of base linear networks with trainable weight $\Theta_{i}^\mathsf{T}$ that jointly estimate $f(x)$, and $b_{i}$ denotes the bias constant vector. Such an estimation directly reduces the overhead in terms of both computing and memory with only limited accuracy loss~\cite{oiwa2014partition,liu2018unified,li2020field}. 

Differently, in our method, we further utilize a set of associated indicators $\xi_{i} (\cdot)$, i.e., 
\begin{equation}
\mathbf{\Xi}\left ( x \right ) = \left [ \xi_{1} (x), \dots,  \xi_{K} (x) \right ], 
\label{indicators}
\end{equation}
to adaptively select a subset of $\left \{ \Theta_{i}^\mathsf{T} \right \}$ to form a composite sample-wise semantic predictor $f(x) \mid \xi_{i} (x)$ for each $x$. 
%binary
%To enable the adaptability of the model on diverse samples in the semantic space, a series of associated indicators, i.e., $\xi_{i} (\cdot)$, are further implemented to provide sparse binary signals that determine a subset of $\left \{ \Theta_{i}^\mathsf{T} \right \}$ to form one composite predictor $f(x)$ for each sample $x$. 
%
Such indicators are expected to be binary and sparse signals to fit with diverse samples in both visual and semantic spaces. Notably, the binarity can provide a gated functionality to active specific $\Theta_{i}^\mathsf{T}$, and the sparsity pushes the composite predictor to explore a better balance between over/under-fittings due to sample-wise selection. 
Without loss of generality, given the $\xi_{i} (\cdot)$ (introduced in sect.~\textit{Sample-Wise Indicators}), we define one variable $\mathbf{\Phi}(x)$ as:
\begin{equation}
\begin{split}
\mathbf{\Phi}(x) &= \left [ \xi_{1} (x), x^\mathsf{T}\xi_{1} (x), \cdots , \xi_{K} (x), x^\mathsf{T}\xi_{K} (x) \right ], \\
&=\left [\mathbf{\Xi}^\mathsf{T}(x) \otimes \left [ \mathbf{1} \ x^\mathsf{T} \right ]  \right ]^\mathsf{T},
\end{split}
\label{Phi}
\end{equation}
where $\mathbf{\Xi}^\mathsf{T}(x) = \left [ \xi_{1} (x), \cdots ,\xi_{K} (x) \right ]$ and $\otimes$ denotes the Kronecker production, 
and denote the variable $\mathbf{\Theta}$ as:
\begin{equation}
\mathbf{\Theta} = \left [ b_{1}, \Theta_{1}^\mathsf{T}, \cdots , b_{K}, \Theta_{K}^\mathsf{T} \right ]^\mathsf{T}.
\end{equation}
Thus, the composite semantic predictor in Eq.~\ref{fx} can be rewritten as a regression form as:
\begin{equation}
f(x) = \mathbf{\Theta}^\mathsf{T}\mathbf{\Phi}(x) + b.
\label{simplified}
\end{equation}
Note that Eq.~\ref{simplified} can be significantly reduced to a lightweight model with the sparse binary signals from $\mathbf{\Xi}^\mathsf{T}(x)$, wherein, only a few numbers of $\left \{ \Theta_{i}^\mathsf{T} \right \}$ are activated. 
In other words, the nonlinear model $f(x)$ can collapse into a linear model at each point $x$ and its nearby local field. 

Mathematically, Eq.~\ref{simplified} can be solved efficiently by constructing a quadratic programming problem with the structural risk minimization as:
\begin{small}
\begin{equation}
\begin{split}
&\mathop{\arg \min}_{\mathbf{\Theta}, b, \gamma_{l}, \gamma_{l}^{*}} \ \frac{1}{2} \mathbf{\Theta}^\mathsf{T} \mathbf{\Theta} + C\sum_{l=1}^{N}\left ( \gamma_{l} + \gamma_{l}^{*} \right ), \\
&\mathrm{s.t.}
\left\{\begin{matrix}
\mathbf{\Theta}^\mathsf{T}\mathbf{\Phi}(x_{l}) + b - a_{y_{l}} \le \epsilon + \gamma_{l}^{*} \\
a_{y_{l}} - \mathbf{\Theta}^\mathsf{T}\mathbf{\Phi}(x_{l}) - b \le \epsilon + \gamma_{l} \\
\gamma_{l}, \gamma_{l}^{*} \ge 0, i = 1,2,\cdots, N
\end{matrix}\right.,
\end{split}
\label{problem}
\end{equation}
\end{small}
where $a_{y_{l}}$ is the expected semantic descriptor, $\gamma_{l}$ and $\gamma_{l}^{*}$ are slack variables, and $C$ is a non-negative weight that penalizes the prediction error $\epsilon$. 
Since the input dimension is usually large, we can consider its dual form. Specifically, we introduce Lagrange multipliers $\alpha_{l} \ge 0$, $\mu_{l} \ge 0$, $\alpha_{l}^{*} \ge 0$, and $\mu_{l}^{*} \ge 0$ to obtain the Lagrange function as:
\begin{small}
\begin{equation}
\begin{split}
&L\left (\mathbf{\Theta}, \gamma_{t}, \gamma_{t}^{*}, \alpha, \alpha^{*}, \mu, \mu^{*}\right ) = \frac{1}{2} \mathbf{\Theta}^\mathsf{T} \mathbf{\Theta} + C\sum_{l=1}^{N}\left ( \gamma_{l} + \gamma_{l}^{*} \right ) \\
&+ \sum_{l=1}^{N} \alpha_{l}\left (\underset{f(x_{l})}{\underbrace{\mathbf{\Theta}^\mathsf{T}\mathbf{\Phi}(x_{l}) + b}} - a_{y_{l}} - \epsilon - \gamma_{l}\right ) \\
&+ \sum_{l=1}^{N} \alpha_{l}^{*} \left (\underset{-f(x_{l})}{\underbrace{-\mathbf{\Theta}^\mathsf{T}\mathbf{\Phi}(x_{l}) - b}} + a_{y_{l}} - \epsilon - \gamma_{l}^{*}\right ) \\
&- \sum_{l=1}^{N}\left (\mu_{l}\gamma_{l} + \mu_{l}^{*}\gamma_{l}^{*}\right ).
\end{split}
\end{equation}
\end{small}
% \begin{equation}
% \begin{split}
% &L\left (\mathbf{\Theta}, \gamma_{t}, \gamma_{t}^{*}, \alpha, \alpha^{*}, \mu, \mu^{*}\right ) = \frac{1}{2} \mathbf{\Theta}^\mathsf{T} \mathbf{\Theta} + C\sum_{l=1}^{N}\left ( \gamma_{l} + \gamma_{l}^{*} \right ) \\
% &+ \sum_{l=1}^{N} \alpha_{l}\left (f(x) - a_{y_{l}} - \epsilon - \gamma_{l}^{*}\right ) \\
% &+ \sum_{l=1}^{N} \alpha_{l}^{*} \left (-f(x) + a_{y_{l}} - \epsilon - \gamma_{l}^{*}\right ) \\
% &- \sum_{l=1}^{N}\left (\mu_{l}\gamma_{l} + \mu_{l}^{*}\gamma_{l}^{*}\right ).
% \end{split}
% \end{equation}
It is easy to note that such a function can be solved by obtaining the saddle point $\frac{\partial L}{\partial \mathbf{\Theta}}=0$, $\frac{\partial L}{\partial \gamma_{l}}=0$, and $\frac{\partial L}{\partial \gamma_{l}^{*}}=0$, and we can rewrite Eq.~\ref{problem} as:
\begin{small}
\begin{equation}
\begin{split}
\mathop{\arg \max}_{\alpha, \alpha^{*}} \ &\sum_{l=1}^{N}\left (\alpha_{l}-\alpha_{l}^{*}\right ) f(x) - \varepsilon \sum_{l=1}^{N}\left (\alpha_{l}+\alpha_{l}^{*}\right ) \\
&-\frac{1}{2}\sum_{l=1}^{N}\sum_{j=1}^{N}\left (\alpha_{l}-\alpha_{l}^{*}\right ) \left (\alpha_{j}-\alpha_{j}^{*}\right ) T \left ( x_{l}, x_{j} \right ), \\
&\mathrm{s.t.}
\sum_{l=1}^{N}\left (\alpha_{l} - \alpha_{l}^{*}\right ) = 0, 
0 \le \alpha_{l}, \alpha_{l}^{*} \le C,
% \left\{\begin{matrix}
% \sum_{l=1}^{N}\left (\alpha_{l} - \alpha_{l}^{*}\right ) = 0  \\
% 0 \le \alpha_{l}, \alpha_{l}^{*} \le C
% \end{matrix}\right.,
\end{split}
\end{equation}
where $T \left ( x_{l}, x_{j} \right )$ is a constructed transformation function defined via Eq.~\ref{Phi} as:
\begin{equation}
\begin{split}
T \left ( x_{l}, x_{j} \right ) &= \mathbf{\Phi}(x_{l})^\mathsf{T} \mathbf{\Phi}(x_{j}) \\
&=\left ( 1 + x_{l}x_{j} \right ) \sum_{i=1}^{M} \xi_{i}(x_{l}) \xi_{i}(x_{j}).
\end{split}
\end{equation}
\end{small}
Solving this problem and obtaining $\alpha_{l}$ and $\alpha_{l}^{*}$, the model in Eq.~\ref{fx} can then be described as:
\begin{equation}
f\left ( x \right ) = \sum_{l=1}^{N}\left ( \alpha_{l} - \alpha_{l}^{*} \right ) T \left ( x, x_{l} \right ) + b.
\end{equation}

\subsection{Sample-Wise Indicators}
\label{sect-indicators}
As to the associated indicators $\mathbf{\Xi}\left ( x \right )$ in Eq.~\ref{indicators}, we expect such signals can have the following properties: 1) discrimination, which can provide sufficient compositions of linear networks for diverse samples; and 2) sparsity and binarity, which activate only a small subset of base linear networks to form the composite predictor. 
Based on this guidance, our method employs a simple unsupervised linear encoder (denoted as $\mathbf{W}$)-decoder (denoted as ${\mathbf{W}}'$) network to preliminarily capture the intrinsic data structure:
\begin{equation}
\mathop{\arg \min}_{\mathbf{W}, {\mathbf{W}}'} \ \left \| \mathbf{X} - {\mathbf{W}}'\mathbf{W}\mathbf{X} \right \|_{2},
\label{encoder}
\end{equation}
wherein, the latent embedding $\mathbf{E}=\mathbf{W}\mathbf{X}$ is usually more representative compressed variables that can be potentially used to construct the indicators $\mathbf{\Xi}\left ( x \right )$. In practice, Eq.~\ref{encoder} can be reformulated as $\left \| \mathbf{X} - \mathbf{W}^{\mathsf{T}} \mathbf{W}\mathbf{X} \right \|_{2}$ by using the tied weights~\cite{ranzato2007sparse}, i.e., ${\mathbf{W}}' = \mathbf{W}^{\mathsf{T}}$, where only $\mathbf{W}$ remains for estimation, hence reducing the complexity. 

Now we elaborate on the design of $\mathbf{\Xi}\left ( x \right )$. Given the embedding $\mathbf{E}_{x} \in \mathbb{R}^{h}$ of a sample $x$, we split it into a set of $K$ components: 
\begin{small}
\begin{equation}
\left \{ \mathbf{E}_{x}^{(i)} = \mathbf{E}_{x}\left [ (i-1)\frac{h}{K} + 1, i\frac{h}{K} \right ] \in \mathbb{R}^{\frac{h}{K}} \right \}_{i=1}^{K},
\end{equation}
\end{small}
%$\left \{ \mathbf{E}_{x}^{(i)} = \mathbf{E}_{x}\left [ (i-1)\frac{h}{K} + 1, i\frac{h}{K} \right ] \in \mathbb{R}^{\frac{h}{K}} \right \}_{i=1}^{K}$ 
that correspond to $K$ base linear networks. 
To determine the selection, we calculate the variance of each $\mathbf{E}_{x}^{(i)}$ to the mean value of $\mathbf{E}_{x}$, i.e., denoted as $\mathrm{Var}^{(i)} \left ( \mathbf{E}_{x}^{(i)} \mid {\mu}_{E_{x}}\right )$.
Then, we can rewrite the indicators of Eq.~\ref{indicators} to: 
\begin{equation}
\mathbf{\Xi}\left ( x \right ) = \left [ \mathrm{Var}^{(1)}, \dots , \mathrm{Var}^{(K)} \right ],
\label{final-indicators}
\end{equation}
%$\mathbf{\Xi}\left ( x \right ) = \left [ \mathrm{Var}^{(1)}, \dots , \mathrm{Var}^{(K)} \right ]$, 
where each $\xi_{i} (x) = \mathrm{Var}^{(i)}$. 
The rationality lies in that, if the variance of $\mathbf{E}_{x}^{(i)}$ is large, then the latent variables are more significant compared with others. Finally, to achieve a sparse subset of base linear networks, we rank all variances and select top-$k$ indicators $\xi_{i} (x)$, i.e., $k \ll K$, and set them to 1, which can activate the corresponding base linear networks in Eq.~\ref{fx} with $\left ( \Theta_{i}^\mathsf{T}x + b_{i}\right ) \cdot 1$. Meanwhile, the remaining indicators are set to 0 to omit these base linear networks. 

It is noticed that the encoder $\mathbf{W}$ can be a pre-trained building block based on $\mathcal{D^{S}}$. In our method, on the one hand, it can be used to construct the sample-wise indicators, and meanwhile, on the other hand, we can also use the latent embedding $\mathbf{E}=\mathbf{W}\mathbf{X}$ as the initial features of the composite linear networks of Eq.~\ref{fx}. 

% In practice, we can reformulate Eq.~\ref{encoder} with the tied weights~\cite{ranzato2007sparse}, i.e., ${\mathbf{W}}' = \mathbf{W}^{\mathsf{T}}$,
% \begin{equation}
% \mathop{\arg \min}_{\mathbf{W}} \ \left \| \mathbf{X} - \mathbf{W}^{\mathsf{T}} \mathbf{W}\mathbf{X} \right \|_{2},
% \end{equation}
%determine the impact of each $\mathbf{E}_{x}^{(i)}$, we calculate the variance of each $\mathbf{E}_{x}^{(i)}$
%$\mathrm{Var} \left ( \mathbf{E}_{x}^{(i)} \mid \mu_{E_{x}}\right )$
%design a switch function $s$
% \begin{equation}
% \begin{split}
% &\frac{\partial L}{\partial \mathbf{\Theta}}=0 \Rightarrow \mathbf{\Theta} = \sum_{l=1}^{N} (\alpha_{l} - \alpha_{l}^{*})\mathbf{\Phi}(x_{l}) \\
% &\frac{\partial L}{\partial \gamma_{l}}=0 \Rightarrow C = \alpha_{l} + \mu_{l} \\
% &\frac{\partial L}{\partial \gamma_{l}^{*}}=0 \Rightarrow C = \alpha_{l}^{*} + \mu_{l}^{*}.
% \end{split}
% \end{equation}

\subsection{Orthogonality and Low-Rankness}
The proposed composite linear networks provide the guarantee of a parsimonious and on-device-friendly framework for our \textit{ParsNets}. 
In this section, inspired by subspace transformation~\cite{qiu2015learning}, we enforce a maximal margin geometry on the mapped features, i.e., low-rank structure to intra-class features and high-rank structure to inter-class features, respectively, which further guarantees the model’s generalization ability.
%we use the nuclear norm of the mapped features via each linear network as the convex surrogate of rank function, and enforce a maximal margin geometry to further guarantee the model’s generalization ability. 
%

Concretely, without loss of generality, given the weight $\Theta_{i}$ of a linear network described in Eq.~\ref{fx}, we rewrite it as the matrix form as $\Theta_{i}^\mathsf{T} \mathbf{X}$, where $\mathbf{X} = \left [ x_{1} \mid x_{2} \mid \dots \mid x_{N} \right ] \in \mathcal{D^{S}}$ with each column $x_{i} \in \mathbb{R}^{d}$ denoting a labeled sample from total $\left | \mathcal{Y^{S}} \right |$ seen classes. Let $\mathbf{X}_{v}$ denote the sample matrix extracted from the columns of $\mathbf{X}$ that belong to $v$-th class, we construct a minimization problem as:
% \begin{equation}
% \mathop{\arg \min}_{\Theta_{i}} \ \sum_{v=1}^{\left | \mathcal{Y^{S}} \right |} \left \| \Theta_{i}^\mathsf{T} \mathbf{X}_{v} \right \|_{*}  - \left \| \Theta_{i}^\mathsf{T} \mathbf{X} \right \|_{*}, \ \mathrm{s.t.} \ \left \| \Theta_{i}^\mathsf{T} \right \|_{2} = 1,   
% \end{equation}
\begin{small}
\begin{equation}
\begin{split}
&\mathop{\arg \min}_{\Theta_{i}} \ \sum_{i=1}^{K} \sum_{v=1}^{\left | \mathcal{Y^{S}} \right |} \left \| \Theta_{i}^\mathsf{T} \mathbf{X}_{v} \right \|_{*}  - \left \| \Theta_{i}^\mathsf{T} \mathbf{X} \right \|_{*}, \\
&\mathrm{s.t.}
% \left\{\begin{matrix}
% \left \| \Theta_{i}^\mathsf{T} \right \| = 1 \\
% \left \langle \Theta_{i}, \Theta_{j} \right \rangle = 0
% \end{matrix}\right.,
%
\left \| \Theta_{i}^\mathsf{T} \right \| = 1, \left \langle \Theta_{i}, \Theta_{j} \right \rangle = 0 (\forall j \in [1,K], i\ne j),
\end{split}
\label{nuclear}
\end{equation}
\end{small}
where $\left \| \cdot \right \|_{*}$ is the nuclear norm which is a relaxation form of the non-differentiable rank function $\mathrm{rank}(\cdot)$, i.e., $\left \| \Theta_{i}^\mathsf{T} \mathbf{X}_{v} \right \|_{*}$, $\left \| \Theta_{i}^\mathsf{T} \mathbf{X} \right \|_{*}$ $\approx$ $\mathrm{rank}(\Theta_{i}^\mathsf{T} \mathbf{X}_{v})$, $\mathrm{rank}(\Theta_{i}^\mathsf{T} \mathbf{X})$. 

Notably, in Eq.~\ref{nuclear}, $\left \| \Theta_{i}^\mathsf{T} \mathbf{X}_{v} \right \|_{*}$ minimizes the rank of the mapped feature matrix of each class, which can encourage intra-class samples to reside in the same linear subspace. Meanwhile, $\left \| \Theta_{i}^\mathsf{T} \mathbf{X} \right \|_{*}$ maximizes the rank of the mapped feature matrix of all classes, which can additionally encourage the aforementioned linear subspaces to be orthogonal from each other, thus maximizing the generalization ability.  
Moreover, $\left \| \Theta_{i}^\mathsf{T} \right \| = 1$ is an extra regularization term, i.e., corresponds to $\varphi \left ( \mathbf{\Theta} \right )$ in Eq.~\ref{vs-mapping}, to avoid zero solution $\Theta_{i}^\mathsf{T} = \mathbf{0}$, and $\left \langle \Theta_{i}, \Theta_{j} \right \rangle = 0$ pushes each linear network to be independent of each other. 

Now, we prove that the global minimum of Eq.~\ref{nuclear} can be reached as 0, when each $\mathbf{X}_{v}$ is orthogonal to the other.

\begin{proposition}
If $\mathbf{X}_{v}$ and $\mathbf{X}_{{v}'}$ are orthogonal to each other, where $\forall v,{v}' \in [1,K]$ and $v \ne {v}'$, then Eq.~\ref{nuclear} reaches the global minimum, i.e., Eq.~\ref{nuclear}=0.
\label{proposition1}
\end{proposition}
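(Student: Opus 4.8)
The plan is to show that the orthogonality of the class submatrices $\mathbf{X}_v$ forces the nuclear-norm sum to telescope, so the objective in Eq.~\ref{nuclear} becomes exactly $0$, and then to argue this is a global minimum by a lower-bound argument. First I would recall that for any matrices $A$ and $B$ whose column spaces are orthogonal (equivalently $A^\mathsf{T}B = \mathbf{0}$), the nuclear norm is additive on the ``stacked'' matrix: if $\mathbf{X} = [\mathbf{X}_1 \mid \mathbf{X}_2 \mid \dots \mid \mathbf{X}_{|\mathcal{Y^S}|}]$ with pairwise-orthogonal blocks, then the nonzero singular values of $\mathbf{X}$ are precisely the union (with multiplicity) of the nonzero singular values of the blocks, hence $\left\| \mathbf{X} \right\|_* = \sum_v \left\| \mathbf{X}_v \right\|_*$. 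The key point I would make explicit is that applying the linear map $\Theta_i^\mathsf{T}$ preserves this orthogonality structure: since $\mathbf{X}_v^\mathsf{T} \mathbf{X}_{v'} = \mathbf{0}$ for $v \ne v'$, the transformed blocks $\Theta_i^\mathsf{T}\mathbf{X}_v$ and $\Theta_i^\mathsf{T}\mathbf{X}_{v'}$ still have orthogonal column spaces (both lie in the image of $\Theta_i^\mathsf{T}$, but the relevant orthogonality is in the range — I would need to be a bit careful here; see below).

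Concretely, the steps in order: (i) State and justify the additivity lemma for nuclear norms under block orthogonality. (ii) Verify that orthogonality of $\{\mathbf{X}_v\}$ transfers to $\{\Theta_i^\mathsf{T}\mathbf{X}_v\}$ — this is where I expect the main obstacle, since in general $\mathbf{X}_v^\mathsf{T}\mathbf{X}_{v'} = \mathbf{0}$ does not imply $(\Theta_i^\mathsf{T}\mathbf{X}_v)^\mathsf{T}(\Theta_i^\mathsf{T}\mathbf{X}_{v'}) = \mathbf{X}_v^\mathsf{T}\Theta_i\Theta_i^\mathsf{T}\mathbf{X}_{v'} = \mathbf{0}$. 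I would resolve this by invoking the constraint $\left\|\Theta_i^\mathsf{T}\right\| = 1$ together with the intended reading that each $\Theta_i$ acts (up to the orthonormality constraints $\langle\Theta_i,\Theta_j\rangle=0$) as a partial isometry, or alternatively by observing that the proposition is naturally read with $\mathbf{X}$ already in the latent/feature coordinates where $\Theta_i^\mathsf{T}$ is orthonormal on the relevant subspace; I would phrase the hypothesis so that ``orthogonal'' is preserved, e.g. by taking the $\mathbf{X}_v$ orthogonal in the column space after mapping. (iii) Conclude $\sum_v \left\|\Theta_i^\mathsf{T}\mathbf{X}_v\right\|_* = \left\|\Theta_i^\mathsf{T}\mathbf{X}\right\|_*$ for each $i$, so each summand over $v$ cancels against $\left\|\Theta_i^\mathsf{T}\mathbf{X}\right\|_*$, giving objective $= 0$. (iv) Finally, show $0$ is the \emph{global} minimum: by the triangle inequality for the nuclear norm applied to $\mathbf{X} = \sum_v \widetilde{\mathbf{X}}_v$ (where $\widetilde{\mathbf{X}}_v$ is $\mathbf{X}_v$ padded with zero columns), we always have $\left\|\Theta_i^\mathsf{T}\mathbf{X}\right\|_* \le \sum_v \left\|\Theta_i^\mathsf{T}\mathbf{X}_v\right\|_*$, so the objective is $\ge 0$ unconditionally; hence attaining $0$ is optimal.

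The genuinely delicate part is step (ii): the cleanest fix is to note that the nuclear norm additivity under block orthogonality requires orthogonality of \emph{column spaces} of the transformed blocks, and that this is exactly what the subspace-transformation framework of~\cite{qiu2015learning} is engineered to produce — the constraints $\left\|\Theta_i^\mathsf{T}\right\|=1$ and $\langle\Theta_i,\Theta_j\rangle = 0$ make $\Theta_i^\mathsf{T}$ behave as a norm-preserving map on the span of the data, so orthogonal inputs map to matrices with orthogonal ranges. I would therefore state step (ii) as a short lemma under the standing assumptions and spend the bulk of the write-up on steps (i) and (iv), which are routine linear algebra (singular value decomposition bookkeeping and the triangle inequality, respectively).
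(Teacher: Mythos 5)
Your proposal is essentially the paper's own argument: your step (iv) is the paper's Theorem~3.2 (nuclear-norm subadditivity on concatenations, which gives the objective $\ge 0$ unconditionally), and your steps (i) and (iii) are its Theorem~3.3 (SVD-based additivity of the nuclear norm when the blocks are column-wise orthogonal), combined in exactly the same way to conclude the minimum $0$ is attained. The one point where you go beyond the paper is your step (ii): the transfer of orthogonality from the $\mathbf{X}_v$ to the mapped blocks $\Theta_i^\mathsf{T}\mathbf{X}_v$ is a genuine subtlety that the paper's proof silently skips (it passes from the hypothesis on the $\mathbf{X}_v$ to orthogonality of the column spaces of the transformed matrices without comment), and while your proposed fix is only heuristic --- the constraints $\left\|\Theta_i^\mathsf{T}\right\|=1$ and $\left\langle\Theta_i,\Theta_j\right\rangle=0$ concern the norm of $\Theta_i$ and the relation \emph{between different} $\Theta_i$, and do not by themselves make $\Theta_i\Theta_i^\mathsf{T}$ act as the identity on the span of the data, so $(\Theta_i^\mathsf{T}\mathbf{X}_v)^\mathsf{T}(\Theta_i^\mathsf{T}\mathbf{X}_{v'})=\mathbf{0}$ still needs an extra assumption --- you are at least explicit about the gap that the paper leaves implicit.
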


To prove Proposition~\ref{proposition1}, we first present two theorems, i.e., Theorem~\ref{theorem1} and Theorem~\ref{theorem2}.

\begin{theorem}
Let $\mathbf{M}$ and $\mathbf{N}$ be matrices that have the same row dimensions, and let $\left [ \mathbf{M}, \mathbf{N} \right ] $ be the concatenation of $\mathbf{M}$ and $\mathbf{N}$, we have:
\begin{equation}
\left \| \left [ \mathbf{M}, \mathbf{N} \right ] \right \|_{*} \le \left \| \mathbf{M} \right \|_{*} + \left \| \mathbf{N} \right \|_{*}.
\end{equation}
\label{theorem1}
\end{theorem}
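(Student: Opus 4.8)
The plan is to derive the inequality from the dual characterization of the nuclear norm, namely $\|\mathbf{A}\|_{*} = \max\{\langle \mathbf{A}, \mathbf{Z}\rangle : \|\mathbf{Z}\|_{\mathrm{op}} \le 1\}$, where $\langle \mathbf{P}, \mathbf{Q}\rangle = \mathrm{tr}(\mathbf{P}^{\mathsf{T}}\mathbf{Q})$ is the Frobenius pairing and $\|\cdot\|_{\mathrm{op}}$ is the spectral norm; this identity holds because the spectral and nuclear norms are dual to one another under that pairing. Taking it as the starting point reduces the claim to an elementary manipulation of block matrices. (Since $\|\mathbf{A}^{\mathsf{T}}\|_{*} = \|\mathbf{A}\|_{*}$, it suffices to treat the horizontal concatenation that is used in the sequel; the vertical case follows by transposing.)

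First I would fix an arbitrary $\mathbf{Z}$ with $\|\mathbf{Z}\|_{\mathrm{op}} \le 1$ having the same shape as $[\mathbf{M}, \mathbf{N}]$, and split it conformably as $\mathbf{Z} = [\mathbf{Z}_{1}, \mathbf{Z}_{2}]$ so that $\mathbf{Z}_{1}$ has as many columns as $\mathbf{M}$ and $\mathbf{Z}_{2}$ as many as $\mathbf{N}$. The block form of the trace gives $\langle [\mathbf{M}, \mathbf{N}], \mathbf{Z}\rangle = \langle \mathbf{M}, \mathbf{Z}_{1}\rangle + \langle \mathbf{N}, \mathbf{Z}_{2}\rangle$. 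The key elementary fact is that deleting columns cannot increase the spectral norm: for a unit vector $u$, padding it with zeros to a vector $\tilde{u}$ of the full width yields $\|\mathbf{Z}_{1} u\| = \|\mathbf{Z}\tilde{u}\| \le \|\mathbf{Z}\|_{\mathrm{op}} \le 1$, so $\|\mathbf{Z}_{1}\|_{\mathrm{op}} \le 1$ and likewise $\|\mathbf{Z}_{2}\|_{\mathrm{op}} \le 1$. Applying the dual characterization again, now to $\mathbf{M}$ and to $\mathbf{N}$ separately, gives $\langle \mathbf{M}, \mathbf{Z}_{1}\rangle \le \|\mathbf{M}\|_{*}$ and $\langle \mathbf{N}, \mathbf{Z}_{2}\rangle \le \|\mathbf{N}\|_{*}$. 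Summing and taking the supremum over all admissible $\mathbf{Z}$ yields $\|[\mathbf{M}, \mathbf{N}]\|_{*} \le \|\mathbf{M}\|_{*} + \|\mathbf{N}\|_{*}$.

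If one prefers to avoid duality, two alternative routes are available. One can observe that $[\mathbf{M}, \mathbf{N}][\mathbf{M}, \mathbf{N}]^{\mathsf{T}} = \mathbf{M}\mathbf{M}^{\mathsf{T}} + \mathbf{N}\mathbf{N}^{\mathsf{T}}$, so $\|[\mathbf{M}, \mathbf{N}]\|_{*} = \mathrm{tr}\bigl((\mathbf{M}\mathbf{M}^{\mathsf{T}} + \mathbf{N}\mathbf{N}^{\mathsf{T}})^{1/2}\bigr)$, and then invoke the subadditivity of $\mathbf{P} \mapsto \mathrm{tr}(\mathbf{P}^{1/2})$ on positive semidefinite matrices, a consequence of the operator concavity of $t \mapsto t^{1/2}$. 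Alternatively, one can use the factorization form $\|\mathbf{A}\|_{*} = \min\{\tfrac{1}{2}(\|\mathbf{P}\|_{F}^{2} + \|\mathbf{Q}\|_{F}^{2}) : \mathbf{A} = \mathbf{P}\mathbf{Q}^{\mathsf{T}}\}$: from optimal balanced factorizations $\mathbf{M} = \mathbf{P}_{1}\mathbf{Q}_{1}^{\mathsf{T}}$ and $\mathbf{N} = \mathbf{P}_{2}\mathbf{Q}_{2}^{\mathsf{T}}$, the choice $\mathbf{P} = [\mathbf{P}_{1}, \mathbf{P}_{2}]$ together with the block-diagonal $\mathbf{Q} = \mathrm{diag}(\mathbf{Q}_{1}, \mathbf{Q}_{2})$ is a feasible factorization of $[\mathbf{M}, \mathbf{N}]$ whose cost equals $\|\mathbf{M}\|_{*} + \|\mathbf{N}\|_{*}$.

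The only real subtlety — the step I would treat most carefully — is the claim that passing to a column submatrix does not increase the spectral norm (equivalently, that removing columns does not enlarge the top singular value); everything else is bookkeeping with block matrices plus the standard spectral/nuclear duality. I would also note in passing that the bound is tight, with equality when the column spaces of $\mathbf{M}$ and $\mathbf{N}$ are mutually orthogonal, since this is precisely the regime exploited when Theorem~\ref{theorem1} is later combined with Theorem~\ref{theorem2} to show that the objective in Eq.~\ref{nuclear} attains $0$ in Proposition~\ref{proposition1}.
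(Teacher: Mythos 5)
Your argument is correct, but it takes a different route from the paper. The paper's proof is a one-liner via the triangle inequality for the nuclear norm: it writes $\left[\mathbf{M}, \mathbf{N}\right] = \left[\mathbf{M}\ \mathbf{0}\right] + \left[\mathbf{0}\ \mathbf{N}\right]$, uses the fact that zero-padding does not change the nuclear norm (so $\left\| \left[\mathbf{M}\ \mathbf{0}\right] \right\|_{*} = \left\| \mathbf{M} \right\|_{*}$ and $\left\| \left[\mathbf{0}\ \mathbf{N}\right] \right\|_{*} = \left\| \mathbf{N} \right\|_{*}$), and applies subadditivity of $\left\|\cdot\right\|_{*}$. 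You instead work directly from the spectral--nuclear duality $\|\mathbf{A}\|_{*} = \max\{\langle \mathbf{A}, \mathbf{Z}\rangle : \|\mathbf{Z}\|_{\mathrm{op}} \le 1\}$, split the dual variable conformably, and use the (correctly justified) fact that deleting columns cannot increase the spectral norm. In effect you re-derive, in this special block setting, the very subadditivity the paper takes as known, so your proof is more self-contained — it does not presuppose the triangle inequality for $\|\cdot\|_{*}$, which itself is usually proved by the same duality — at the cost of a few extra lines; the paper's proof is shorter but leans on that standard fact plus the zero-padding invariance, which it uses without comment. Your two alternative routes (subadditivity of $\mathbf{P} \mapsto \mathrm{tr}(\mathbf{P}^{1/2})$ applied to $\mathbf{M}\mathbf{M}^{\mathsf{T}} + \mathbf{N}\mathbf{N}^{\mathsf{T}}$, and the balanced-factorization characterization) are also sound, and your closing remark about equality under column-space orthogonality correctly anticipates how Theorem~\ref{theorem1} is combined with Theorem~\ref{theorem2} in the proof of Proposition~\ref{proposition1}.
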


\begin{proof}[Proof of \textbf{Theorem~\ref{theorem1}}]
Can be proved easily via:
\begin{equation}
\begin{split}
\left \| \mathbf{M} \right \|_{*} + \left \| \mathbf{N} \right \|_{*} &= \left \| [\mathbf{M} \ \mathbf{0}] \right \|_{*} + \left \| [\mathbf{0} \ \mathbf{N}] \right \|_{*} \\
&\ge \left \| [\mathbf{M} \ \mathbf{0}] + [\mathbf{0} \ \mathbf{N}] \right \|_{*} 
= \left \| [\mathbf{M}, \mathbf{N}] \right \|_{*}. 
\end{split}
\end{equation}
\end{proof}

\begin{theorem}
Let $\mathbf{M}$ and $\mathbf{N}$ be matrices that have the same row dimensions, and let $\left [ \mathbf{M}, \mathbf{N} \right ] $ be the concatenation of $\mathbf{M}$ and $\mathbf{N}$, we have:
\begin{equation}
\left \| \left [ \mathbf{M}, \mathbf{N} \right ] \right \|_{*} = \left \| \mathbf{M} \right \|_{*} + \left \| \mathbf{N} \right \|_{*},
\label{tm2}
\end{equation}
when $\mathbf{M}$ and $\mathbf{N}$ are column-wise orthogonal.
\label{theorem2}
\end{theorem}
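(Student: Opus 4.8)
The plan is to prove the reverse of the inequality in Theorem~\ref{theorem1}; combined with that theorem it gives the claimed equality. I read ``column-wise orthogonal'' as the condition $\mathbf{M}^\mathsf{T}\mathbf{N} = \mathbf{0}$, i.e.\ every column of $\mathbf{M}$ is orthogonal to every column of $\mathbf{N}$, which is the same as saying the column spaces $\mathrm{range}(\mathbf{M})$ and $\mathrm{range}(\mathbf{N})$ are orthogonal subspaces of the common ambient space.

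First I would pass to reduced singular value decompositions $\mathbf{M} = \mathbf{U}_{M}\Sigma_{M}\mathbf{V}_{M}^\mathsf{T}$ and $\mathbf{N} = \mathbf{U}_{N}\Sigma_{N}\mathbf{V}_{N}^\mathsf{T}$, where $\mathbf{U}_{M},\mathbf{U}_{N}$ have orthonormal columns spanning $\mathrm{range}(\mathbf{M})$ and $\mathrm{range}(\mathbf{N})$ respectively, $\mathbf{V}_{M},\mathbf{V}_{N}$ have orthonormal columns, and $\Sigma_{M},\Sigma_{N}$ are the diagonal matrices of nonzero singular values. The orthogonality of the column spaces yields $\mathbf{U}_{M}^\mathsf{T}\mathbf{U}_{N} = \mathbf{0}$, so the horizontally stacked matrix $[\mathbf{U}_{M},\mathbf{U}_{N}]$ still has orthonormal columns; likewise the block-diagonal matrix with diagonal blocks $\mathbf{V}_{M},\mathbf{V}_{N}$ has orthonormal columns. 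Then I would check the block identity that expresses $[\mathbf{M},\mathbf{N}]$ as $[\mathbf{U}_{M},\mathbf{U}_{N}]$ times the block-diagonal matrix $\mathrm{diag}(\Sigma_{M},\Sigma_{N})$ times the transpose of the block-diagonal matrix $\mathrm{diag}(\mathbf{V}_{M},\mathbf{V}_{N})$, which is precisely a reduced SVD of $[\mathbf{M},\mathbf{N}]$. Hence the multiset of singular values of $[\mathbf{M},\mathbf{N}]$ is the disjoint union of those of $\mathbf{M}$ and of $\mathbf{N}$, and summing them gives $\left\| [\mathbf{M},\mathbf{N}] \right\|_{*} = \left\| \mathbf{M} \right\|_{*} + \left\| \mathbf{N} \right\|_{*}$.

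An equivalent route I would keep as a fallback is via the Gram matrix: under $\mathbf{M}^\mathsf{T}\mathbf{N} = \mathbf{0}$ the cross blocks of $[\mathbf{M},\mathbf{N}]^\mathsf{T}[\mathbf{M},\mathbf{N}]$ vanish, so this Gram matrix is block diagonal with blocks $\mathbf{M}^\mathsf{T}\mathbf{M}$ and $\mathbf{N}^\mathsf{T}\mathbf{N}$; its eigenvalue multiset is therefore the union of theirs, and taking square roots transfers the statement to singular values and hence to the trace-norm identity.

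The only genuinely delicate point is the bookkeeping around rank deficiency: I must make sure the reduced SVD blocks glue into an honest orthonormal-column factorization (in particular that $[\mathbf{U}_{M},\mathbf{U}_{N}]$ has full column rank, which is exactly where orthogonality of the ranges enters) and that the step ``the singular values of $\mathbf{U}\Sigma\mathbf{V}^\mathsf{T}$ with $\mathbf{U},\mathbf{V}$ having orthonormal columns are the diagonal entries of $\Sigma$'' is applied correctly. This is routine linear algebra rather than a real obstacle; once it is in place the equality, and with it the equality case needed afterwards in Proposition~\ref{proposition1}, follows at once.
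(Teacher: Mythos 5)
Your proof is correct, and it reaches the same equality by a closely related but not identical route. The paper's proof stays on the Gram-matrix side: it takes (full) SVDs of $\mathbf{M}$ and $\mathbf{N}$, uses the range orthogonality ${\mathbf{U}_{\mathbf{M}1}}^\mathsf{T}\mathbf{U}_{\mathbf{N}1}=0$ to re-express $\mathbf{M}\mathbf{M}^\mathsf{T}$ and $\mathbf{N}\mathbf{N}^\mathsf{T}$ in the common orthonormal system $\left[\mathbf{U}_{\mathbf{M}1}\ \mathbf{U}_{\mathbf{N}1}\right]$, and then reads off the eigenvalues of $[\mathbf{M},\mathbf{N}][\mathbf{M},\mathbf{N}]^\mathsf{T}=\mathbf{M}\mathbf{M}^\mathsf{T}+\mathbf{N}\mathbf{N}^\mathsf{T}$ as the union of those of the two summands, concluding via ``nuclear norm equals the sum of square roots of the eigenvalues of $\mathbf{M}\mathbf{M}^\mathsf{T}$.'' Your primary argument instead glues the two reduced SVDs into an explicit reduced SVD of $[\mathbf{M},\mathbf{N}]$, so the singular-value multiset is directly the disjoint union; your fallback via the block-diagonal column Gram matrix $[\mathbf{M},\mathbf{N}]^\mathsf{T}[\mathbf{M},\mathbf{N}]$ (cross blocks $\mathbf{M}^\mathsf{T}\mathbf{N}$ vanish) is essentially the transpose-side twin of the paper's computation and is arguably the most economical of the three. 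What your version buys is cleanliness: you interpret ``column-wise orthogonal'' explicitly as $\mathbf{M}^\mathsf{T}\mathbf{N}=\mathbf{0}$ (equivalently orthogonal ranges, which is also what the paper's condition ${\mathbf{U}_{\mathbf{M}1}}^\mathsf{T}\mathbf{U}_{\mathbf{N}1}=0$ encodes), you handle rank deficiency by working only with nonzero singular values, and you avoid the paper's loose bookkeeping (it writes the SVDs of the rectangular $\mathbf{M}$, $\mathbf{N}$ with identical left and right factors, which only makes literal sense for the symmetric products $\mathbf{M}\mathbf{M}^\mathsf{T}$, $\mathbf{N}\mathbf{N}^\mathsf{T}$); what the paper's route buys is that it never needs to verify that the glued factorization is an honest SVD, only that a sum of two PSD matrices diagonalizes in a shared basis. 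No gap in your argument.
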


\begin{proof}[Proof of \textbf{Theorem~\ref{theorem2}}]
We apply the singular value decomposition to $\mathbf{M}$ and $\mathbf{N}$ as:
\begin{equation}
\begin{split}
&\mathbf{M} = \left [ \mathbf{U}_{\mathbf{M}1} \mathbf{U}_{\mathbf{M}2} \right  ] \begin{bmatrix}
\sum_{\mathbf{M}} & 0 \\
 0 & 0
\end{bmatrix}
{\left [ \mathbf{U}_{\mathbf{M}1} \mathbf{U}_{\mathbf{M}2} \right  ]}', \\
&\mathbf{N} = \left [ \mathbf{U}_{\mathbf{N}1} \mathbf{U}_{\mathbf{N}2} \right  ] \begin{bmatrix}
\sum_{\mathbf{N}} & 0 \\
 0 & 0
\end{bmatrix}
{\left [ \mathbf{U}_{\mathbf{N}1} \mathbf{U}_{\mathbf{N}2} \right  ]}', 
\end{split}
\end{equation}
where $\sum_{\mathbf{M}}$ and $\sum_{\mathbf{N}}$ contain non-zero singular values, then we can have:
\begin{equation}
\begin{split}
&\mathbf{M}{\mathbf{M}}' = \left [ \mathbf{U}_{\mathbf{M}1} \mathbf{U}_{\mathbf{M}2} \right  ] \begin{bmatrix}
{\sum_{\mathbf{M}}}^{2} & 0 \\
 0 & 0
\end{bmatrix}
{\left [ \mathbf{U}_{\mathbf{M}1} \mathbf{U}_{\mathbf{M}2} \right  ]}', \\
&\mathbf{N}{\mathbf{N}}' = \left [ \mathbf{U}_{\mathbf{N}1} \mathbf{U}_{\mathbf{N}2} \right  ] \begin{bmatrix}
{\sum_{\mathbf{N}}}^{2} & 0 \\
 0 & 0
\end{bmatrix}
{\left [ \mathbf{U}_{\mathbf{N}1} \mathbf{U}_{\mathbf{N}2} \right  ]}'.
\end{split}
\label{pf2-2}
\end{equation}
Given that $\mathbf{M}$ and $\mathbf{N}$ are column-wise orthogonal, i.e., ${\mathbf{U}_{\mathbf{M}1}}' \mathbf{U}_{\mathbf{N}1} = 0$, then Eq.~\ref{pf2-2} can be rewritten as:
\begin{equation}
\begin{split}
&\mathbf{M}{\mathbf{M}}' = \left [ \mathbf{U}_{\mathbf{M}1} \mathbf{U}_{\mathbf{N}1} \right  ] \begin{bmatrix}
{\sum_{\mathbf{M}}}^{2} & 0 \\
 0 & 0
\end{bmatrix}
{\left [ \mathbf{U}_{\mathbf{M}1} \mathbf{U}_{\mathbf{N}1} \right  ]}', \\
&\mathbf{N}{\mathbf{N}}' = \left [ \mathbf{U}_{\mathbf{M}1} \mathbf{U}_{\mathbf{N}1} \right  ] \begin{bmatrix}
0 & 0 \\
 0 & {\sum_{\mathbf{N}}}^{2}
\end{bmatrix}
{\left [ \mathbf{U}_{\mathbf{M}1} \mathbf{U}_{\mathbf{N}1} \right  ]}'.
\end{split}
\end{equation}
Then we can have:
\begin{equation}
\begin{split}
&[\mathbf{M}, \mathbf{N}] {[\mathbf{M}, \mathbf{N}]}' = \mathbf{M}{\mathbf{M}}' + \mathbf{N}{\mathbf{N}}' \\
&= \left [ \mathbf{U}_{\mathbf{M}1} \mathbf{U}_{\mathbf{N}1} \right  ] \begin{bmatrix}
{\sum_{\mathbf{M}}}^{2} & 0 \\
 0 & {\sum_{\mathbf{N}}}^{2}
\end{bmatrix}
{\left [ \mathbf{U}_{\mathbf{M}1} \mathbf{U}_{\mathbf{N}1} \right  ]}'.
\end{split}
\end{equation}
Since the nuclear norm $\left \| \mathbf{M} \right \|_{*}$ equals to the sum of the square root of the singular values of $\mathbf{M}{\mathbf{M}}'$, so we can have $\left \| \left [ \mathbf{M}, \mathbf{N} \right ] \right \|_{*} = \left \| \mathbf{M} \right \|_{*} + \left \| \mathbf{N} \right \|_{*}$ that proves Eq.~\ref{tm2}.
\end{proof}

\begin{proof}[Proof of \textbf{Proposition~\ref{proposition1}}]
It is obvious that Theorem~\ref{theorem1} and Theorem~\ref{theorem2} can be extended to multiple matrices. As a result, for Eq.~\ref{nuclear}, we have:
\begin{equation}
\sum_{i=1}^{K} \sum_{v=1}^{\left | \mathcal{Y^{S}} \right |} \left \| \Theta_{i}^\mathsf{T} \mathbf{X}_{v} \right \|_{*}  - \left \| \Theta_{i}^\mathsf{T} \mathbf{X} \right \|_{*} \ge 0.
\label{pf3}
\end{equation}
Based on Theorem~\ref{theorem2} and Eq.~\ref{pf3}, the minimization problem described in Eq.~\ref{nuclear} can achieve the global minimum of 0, if the column spaces of all pairs of matrices are orthogonal. 
\end{proof}

% \subsection{Unified Framework}
% \begin{small}
% \begin{algorithm}[tb]
% \caption{Unified Framework of \textit{ParsNets}}
% \label{algo:pFedKM}
% \textbf{Input}: ${\Theta}_I^0, {\Omega}_K^0, P^0, T, R, S, K, \lambda, \eta, \alpha, \beta$. \\
% \textbf{Output}: ${\Theta}_I^T$
% \begin{algorithmic}[1]
%   \FOR {$t=0$ to $T-1$} 
%     \STATE Server sends ${\Omega}_K^t$ to clients according to $P^t$.
%     \FOR {local device $i=1$ to $N$ in parallel} 
%       \STATE Initialization: ${\Omega}_{I,0}^t = {\Omega}_I^t$.
%       \STATE Local update for the sub-problem of $G({\Theta}_I, {\Omega}_K)$: 
%       \FOR {$r=0$ to $R-1$} 
%         \FOR {$s=0$ to $S-1$}
%             \STATE Update the personalized model:
%             \STATE ${\theta}_i^{s+1} = {\theta}_i^s - \eta \nabla F_i({\theta}_i^s)$.
%         \ENDFOR
%         \STATE Local update: 
%         \STATE ${\omega}_{i,r+1}^t = {\omega}_{i,r}^t-\beta\nabla G({\omega}_{i,r}^t)$.
%       \ENDFOR
%     \ENDFOR
%     \STATE Clients send back ${\omega}_{i,R}^t$ to the server.
%     \STATE Server conducts (\textit{k}-means) clustering on models ${\Omega}_{I,R}^t$ to obtain $P^{t+1}$.
%     \STATE Global aggregation: 
%     \STATE ${\Omega}_K^{t+1} = {\Omega}_K^t - \alpha({\Omega}_K^t - {\Omega}_{I,R}^t P^{t+1})$.
%   \ENDFOR
%   \STATE {\textbf{return} The personalized models ${\Theta}_I$.}
% \end{algorithmic}
% \end{algorithm}
% \end{small}

\begin{table*}[htbp]
\setlength{\tabcolsep}{4mm}{
\fontsize{7.5}{10}\selectfont
    \begin{center}
            \begin{tabular}{|l|c|c|c|c|c|}        
                \hline                   
\textbf{Method} &\textbf{Venue} & \textbf{AWA2} & \textbf{CUB-200} & \textbf{SUN} & \textbf{aPY}   \\
\hline            
SE-ZSL~\cite{kumar2018generalized} &CVPR~$'{18}$ &80.8 &60.3 &64.5 &39.8 \\
%GAL \cite{yu2019zero}  &?~$'{?}$ & & & & \\
f-CLSWGAN~\cite{xian2018feature} &CVPR~$'{18}$ &68.2 &57.3 &60.8 &- \\
Zhu~\textit{et al.}~\cite{zhu2019semantic}  &NeurIPS~$'{19}$ &\underline{\textbf{83.5}} &70.5 &- &- \\
AREN~\cite{xie2019attentive} &CVPR~$'{19}$ &67.9 &70.7 &61.7 &44.1 \\
APNet~\cite{liu2020attribute} &AAAI~$'{20}$ &68.0 &57.7 &62.3 &41.3 \\
RGEN~\cite{xie2020region} &ECCV~$'{20}$ &73.6 &76.1 &63.8 &44.4 \\
OCD-CVAE~\cite{keshari2020generalized} &CVPR~$'{20}$ &\underline{81.7} &60.8 &\textbf{68.9} &- \\
DAZLE~\cite{huynh2020fine} &CVPR~$'{20}$ &75.2 &64.1 &62.5 &- \\
LsrGAN~\cite{vyas2020leveraging} &ECCV~$'{20}$ &66.4 &60.3 &62.5 &- \\
APN~\cite{xu2020attribute} &NeurIPS~$'{20}$ &68.4 &72.0 &61.6 &- \\
HSVA~\cite{chen2021hsva}  &NeurIPS~$'{21}$ &70.6 &62.8 &63.8 &- \\
VGSE~\cite{xu2022vgse}   &CVPR~$'{22}$ &64.0 &28.9 &38.1 &- \\
TDCSS~\cite{feng2022non}    &CVPR~$'{22}$ &71.2 &61.1 &- &- \\
PSVMA~\cite{liu2023progressive} &CVPR~$'{23}$ &79.4 &72.9 &\underline{66.5} &\underline{45.9} \\
GKU~\cite{guo2023graph} &AAAI~$'{23}$ &- &\underline{76.9} &- &- \\
DGZ~\cite{chen2023deconstructed} &AAAI~$'{23}$ &74.0 &\textbf{80.1} &65.4 &\textbf{46.6} \\
\hline
\textbf{\textit{ParsNets} (ours)} &Proposed &\textbf{82.6} &\underline{\textbf{80.3}} &\underline{\textbf{70.2}} &\underline{\textbf{48.7}} \\
\hline
\end{tabular}
\caption{Comparison of ZSL performance with state-of-the-art competitors (accuracy \%). The best result is marked in `\underline{\textbf{underlined bold}}', the second in `\textbf{bold}', and the third in `\underline{underlined}'. `-' indicates there is no reported result/open source or not applicable to the dataset.}\label{Results-ZSL}
\end{center}
}
%\vspace{-4mm}
\end{table*}

\begin{table*}[t]
\fontsize{7.5}{10}\selectfont
    \begin{center}
    \begin{tabular}{|l|c|cc|c|cc|c|cc|c|cc|c|}        
\hline
\multicolumn{1}{|c|}{\multirow{2}{*}{\textbf{Method}}} 
& \multicolumn{1}{c|}{\multirow{2}{*}{\textbf{Venue}}} 
%& \multicolumn{1}{|c|}{\multirow{2}{*}{Fine-grained}} 
& \multicolumn{3}{c|}{\textbf{AWA2}}                                              
& \multicolumn{3}{c|}{\textbf{CUB-200}}
& \multicolumn{3}{c|}{\textbf{SUN}}    
& \multicolumn{3}{c|}{\textbf{aPY}} \\ 
\cline{3-14} 
\multicolumn{1}{|c|}{}                   
%& \multicolumn{1}{c|}{}                   
& \multicolumn{1}{c|}{}                   
& \multicolumn{1}{c|}{\textbf{U}} & \multicolumn{1}{c|}{\textbf{S}} & \multicolumn{1}{c|}{\textbf{H}} 
& \multicolumn{1}{c|}{\textbf{U}} & \multicolumn{1}{c|}{\textbf{S}} & \multicolumn{1}{c|}{\textbf{H}}
& \multicolumn{1}{c|}{\textbf{U}} & \multicolumn{1}{c|}{\textbf{S}} & \multicolumn{1}{c|}{\textbf{H}} 
& \multicolumn{1}{c|}{\textbf{U}} & \multicolumn{1}{c|}{\textbf{S}} & \multicolumn{1}{c|}{\textbf{H}} \\
\hline
f-CLSWGAN~\cite{xian2018feature} &CVPR~$'{18}$ &57.9 &61.4 &59.6 &43.7 &57.7 &49.7 &42.6 &36.6 &39.4 &- &- &- \\
SE-GZSL~\cite{kumar2018generalized} &CVPR~$'{18}$ &58.3 &68.1 &62.8 &41.5 &53.3 &46.7 &40.9 &30.5 &34.9 &- &- &- \\
Zhu~\textit{et al.}~\cite{zhu2019semantic} &NeurIPS~$'{19}$ &37.6 &\underline{\textbf{87.1}} &52.5 &36.7 &71.3 &48.5 &- &- &- &- &- &- \\
AREN~\cite{xie2019attentive} &CVPR~$'{19}$ &54.7 &79.1 &64.7 &63.2 &69.0 &66.0 &40.3 &32.3 &35.9 &30.0 &47.9 &36.9 \\
LsrGAN~\cite{vyas2020leveraging} &ECCV~$'{20}$ &54.6 &74.6 &63.0 &48.1 &59.1 &53.0 &44.8 &37.7 &40.9 &- &- &- \\
DAZLE~\cite{huynh2020fine} &CVPR~$'{20}$ &\underline{75.7} &60.3 &67.1 &59.6 &56.7 &58.1 &24.3 &\underline{\textbf{52.3}} &33.2 &- &- &- \\
OCD-CVAE~\cite{keshari2020generalized} &CVPR~$'{20}$ &59.5 &73.4 &65.7 &44.8 &59.9 &51.3 &44.8 &42.9 &\underline{43.8} &- &- &- \\
RGEN~\cite{xie2020region} &ECCV~$'{20}$ &67.1 &76.5 &\underline{71.5} &60.0 &\underline{73.5} &66.1 &44.0 &31.7 &36.8 &30.4 &\underline{48.1} &\underline{37.2} \\
APNet~\cite{liu2020attribute} &AAAI~$'{20}$ &\underline{\textbf{83.9}} &54.8 &66.4 &55.9 &48.1 &51.7 &40.6 &35.4 &37.8 &\underline{\textbf{74.7}} &32.7 &45.5 \\
HSVA~\cite{chen2021hsva}  &NeurIPS~$'{21}$ &56.7 &79.8 &66.3 &52.7 &58.3 &55.3 &\underline{48.6} &39.0 &43.3 &- &- &- \\
TDCSS~\cite{feng2022non} &CVPR~$'{22}$ &59.2 &74.9 &66.1 &44.2 &62.8 &51.9 &- &- &- &- &- &- \\
VGSE~\cite{xu2022vgse}   &CVPR~$'{22}$ &51.2 &\textbf{81.8} &63.0 &21.9 &45.5 &29.5 &24.1 &31.8 &27.4 &- &- &- \\
PSVMA~\cite{liu2023progressive} &CVPR~$'{23}$ &73.6 &77.3 &\textbf{75.4} &\underline{70.1} &\textbf{77.8} &\textbf{73.8} &\underline{\textbf{61.7}} &\underline{45.3} &\textbf{52.3} &- &- &- \\
GKU~\cite{guo2023graph} &AAAI~$'{23}$ &- &- &- &52.3 &71.1 &60.3 &- &- &- &- &- &- \\
DGZ~\cite{chen2023deconstructed} &AAAI~$'{23}$ &65.9 &78.2 &\underline{71.5} &\textbf{71.4} &64.8 &\underline{68.0} &49.9 &37.6 &42.8 &\underline{38.0} &\textbf{63.5} &\textbf{47.6} \\
% f-CLSWGAN~\cite{xian2018feature} &CVPR~$'{18}$ &57.9 &61.4 &59.6 &43.7 &57.7 &49.7 &42.6 &36.6 &39.4 &- &- &- \\
% SE-GZSL~\cite{kumar2018generalized} &CVPR~$'{18}$ &58.3 &68.1 &62.8 &41.5 &53.3 &46.7 &40.9 &30.5 &34.9 &- &- &- \\
\hline
\textbf{\textit{ParsNets} (ours)} &Proposed &\textbf{77.6} &\underline{81.4} &\underline{\textbf{79.5}} &\underline{\textbf{72.8}} &\underline{\textbf{79.4}} &\underline{\textbf{76.0}} &\textbf{57.2} &\textbf{49.5} &\underline{\textbf{53.1}} &\textbf{42.3} &\underline{\textbf{68.6}} &\underline{\textbf{52.3}} \\
\hline
\end{tabular}
\caption{Comparison of GZSL performance with state-of-the-art competitors (accuracy \%). The best result is marked in `\underline{\textbf{underlined bold}}', the second in `\textbf{bold}', and the third in `\underline{underlined}'. `-' indicates there is no reported result/open source or not applicable to the dataset.}\label{Results-GZSL}
\end{center}
%}
%\vspace{-4mm}
\end{table*}

\section{Experiments}
\subsection{Experimental Setup}
\noindent \textbf{Dataset.} 
%\subsubsection{Dataset}
We evaluate our \textit{ParsNets} on four widely used ZSL/GZSL benchmark datasets including:
\begin{itemize}
    \item \textbf{AWA2}: The Animals with Attributes 2~\cite{xian2018zero} is the most widely used ZSL/GZSL benchmark that contains 37,322 samples of 50 animal classes, each with 85 numeric attribute values as the class-level descriptors. 
    \item \textbf{CUB-200}: The Caltech-UCSD Birds-200-2011~\cite{WahCUB_200_2011} is another important benchmark that consists of 11,788 samples of 200 bird species classes, each with 312 attributes as the class-level descriptors. Besides, it also provides fine-grained annotations for fine-grained ZSL/GZSL methods.
    \item \textbf{SUN}: The SUN Attribute~\cite{patterson2014sun} is a large-scale scene image dataset that contains 14,340 samples of 717 scenario-style classes, each with 102 attributes as the class-level descriptors.
    \item \textbf{aPY}: The Attribute Pascal and Yahoo~\cite{farhadi2009describing} is collected from Pascal VOC 2008 and Yahoo that consists of 15,339 samples of 32 classes, each with 64 attributes as the class-level descriptors.
\end{itemize}

% \textbf{AWA2}: Animals with Attributes 2~\cite{xian2018zero} that contains 37,322 samples of 50 animal classes, each with 85 numeric attribute values as the class-level descriptors; 
% %
% \textbf{CUB}: Caltech-UCSD Birds-200-2011~\cite{WahCUB_200_2011} that consists of 11,788 samples of 200 bird classes, each with 312 attributes as the class-level descriptors; 
% %
% \textbf{SUN}: SUN Attribute~\cite{patterson2014sun} that contains 14,340 samples of 717 scenario-style classes, each with 102 attributes as the class-level descriptors; and 
% %
% \textbf{APY}: Attribute Pascal and Yahoo~\cite{farhadi2009describing} that consists of 15,339 samples of 32 classes, each with 64 attributes as the class-level descriptors.
%

As to the splitting strategy of seen and unseen classes for each dataset, we and all involved competitors strictly follow \cite{xian2018zero}, which is the most adopted benchmark splitting for ZSL/GZSL, to ensure a fair comparison.

\noindent \textbf{Evaluation Metrics.} 
%\subsubsection{Evaluation Metrics}
Two different scenarios are considered in our experiments, including the classic ZSL and GZSL. 
For ZSL, the recognition only searches the test samples from unseen classes and reports the multi-way classification accuracy as in previous works for our method and each involved competitor. 
Differently, for GZSL, we compute the average per-class prediction accuracy on test samples from unseen classes (U) and seen classes (S), respectively, and report the Harmonic Mean calculated by $H = \left ( 2 \times U \times S \right )/\left ( U + S \right )$ to quantify the aggregate performance across both seen and unseen classes.
As to the competitors, we select representative \textbf{deep learning-based ZSL/GZSL methods} based on the following criteria: 1) formally published in the most recent years; 2) covered a wide range of models; 3) all of them clearly represented the state-of-the-art.

\noindent \textbf{Implementation.} 
%\subsubsection{Implementation}
We implement the proposed \textit{ParsNets} on Raspberry Pi 4B\footnote{www.raspberrypi.com/products/raspberry-pi-4-model-b}, which is a widely used low-cost edge device platform equipped with ARM Cortex-A72 CPU and 4GB RAM. Similar to the server-based computing architecture, the edge platform is installed with Ubuntu 20.10, Miniconda3, and PyTorch 1.8.0. that can support most modeling frameworks. 
To construct the \textit{ParsNets}, we use the single-layer neural network with ReLU activation function for each of the base linear networks in Eq.~\ref{fx}, and the total number of the networks $K$ is empirically set as 200 for all datasets. 
As to the sample-wise indicators in Eq.~\ref{final-indicators}, we rank all variances and then enable $k$ ranges in \{10, 20, 30, 40, 80, 120, 160, 200\}, which corresponds to \{5\%, 10\%, 15\%, 20\%, 40\%, 60\%, 80\%, 100\%\} of the total base linear networks that have been activated during training. 
As to the visual representation, to align with the common practice of most existing methods, we use the 2048-dimensional visual features extracted from ResNet for each input sample. 
It is noticed that the other competitors are all implemented and running on server-based computers along with powerful GPUs and large storage, further highlighting the on-device-friendly functionality of our method.

% \begin{figure}[t]
%   \centering
% \centerline{\includegraphics[width=0.35\textwidth]{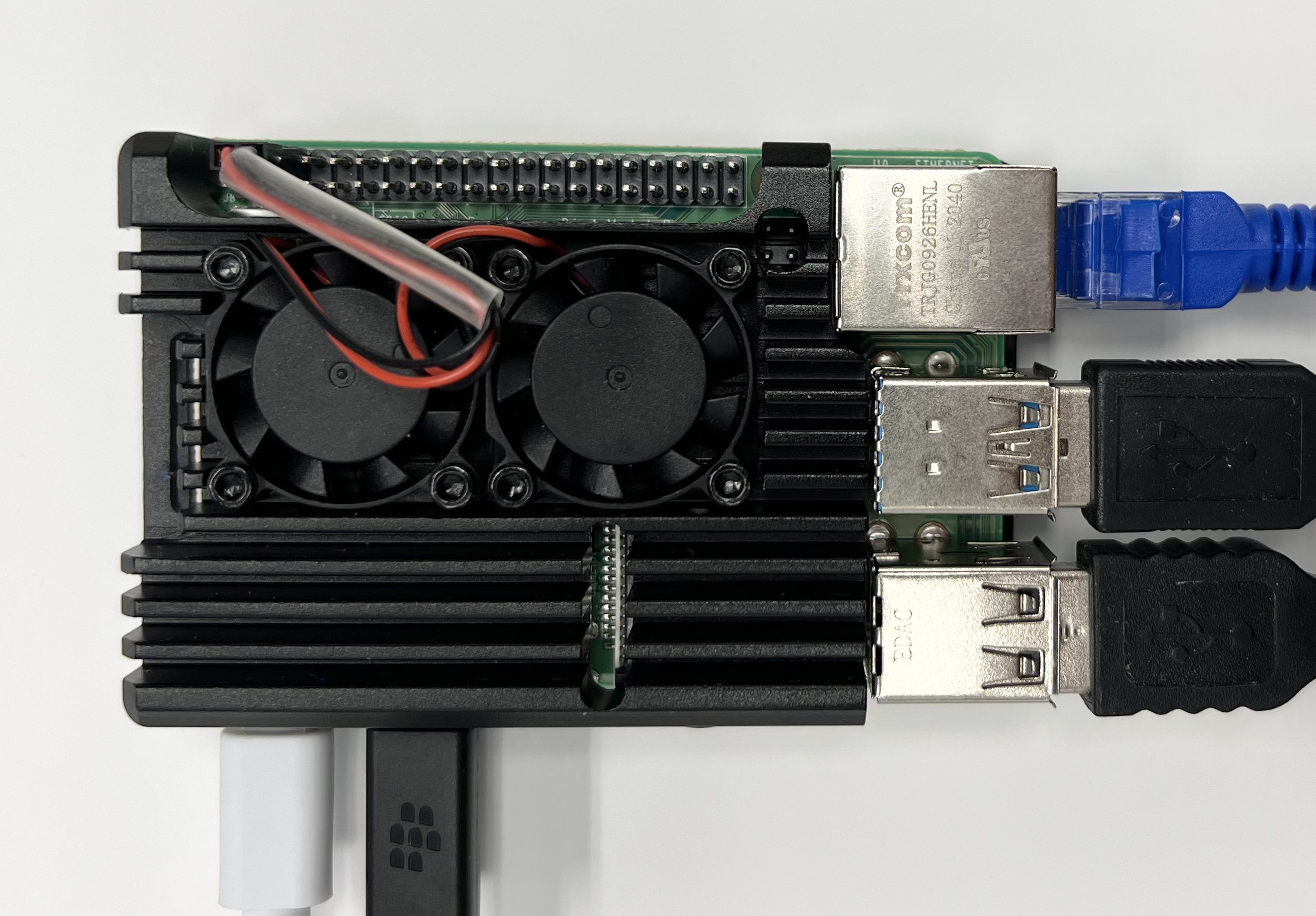}}
%   \caption{A.}
%   \label{fig1}
% %\vspace{-5mm}
% \end{figure}

\subsection{Comparison of ZSL Performance}
We compare the proposed \textit{ParsNets} with 16 state-of-the-art deep learning-based competitors in the classic ZSL scenario and report the multi-way classification accuracy in Table~\ref{Results-ZSL}. It can be observed from the results that our method outperforms most deep learning-based competitors on all datasets. 
For example, DGZ~\cite{chen2023deconstructed} and TDCSS~\cite{feng2022non} are the most two powerful generative model-based ZSL competitors that can achieve 80.1\% and 61.1\% recognition accuracy on CUB-200. In contrast, despite as a non-deep method, our method obtains 0.2\% and 19.2\% higher performance with much lower computing cost. 
On the other hand, as a complex graph fine-grained method that utilized powerful GNNs as the sample representation, GKU~\cite{guo2023graph} achieves 76.9\% on CUB-200. In contrast, our method obtains a much better performance of 80.3\% with/on only the resource-constrained device.

\begin{figure*}[t]
\centering
\subfigure[AWA2-Raw]{\label{m3-original}
\includegraphics[width=1.6in]{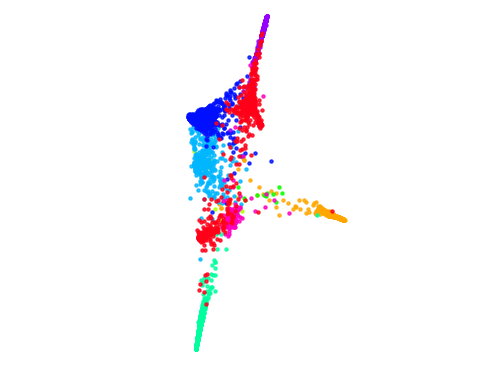}}
\subfigure[AWA2-Mapped]{\label{m3-mapped}
\includegraphics[width=1.6in]{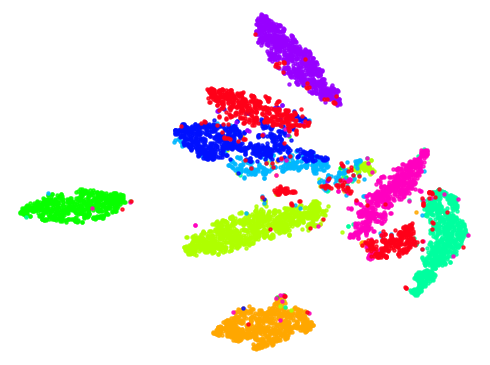}}
\subfigure[CUB-200-Raw]{\label{m-original}
\includegraphics[width=1.6in]{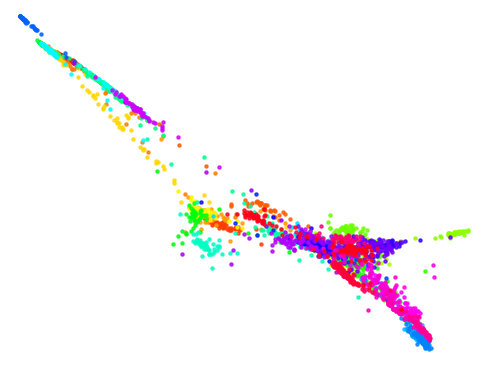}}
\subfigure[CUB-200-Mapped]{\label{m-mapped}
\includegraphics[width=1.6in]{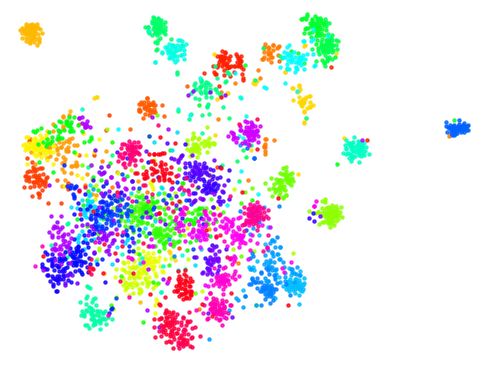}}
\caption{Visualization results of mapping robustness: (a) raw features of unseen classes in AWA2, (b) mapped features of unseen classes in AWA2, (c) raw features of unseen classes in CUB-200, and (d) mapped features of unseen classes in CUB-200 (better viewed in color).} 
\label{tsne}
%\vspace{-5.5mm}
\end{figure*}

\subsection{Comparison of GZSL Performance}
As shown in Table~\ref{Results-GZSL}, we compare the proposed \textit{ParsNets} with 15 state-of-the-art deep learning-based competitors in the GZSL scenario and report the average per-class prediction accuracy on unseen classes (U), seen classes (S), and their Harmonic Mean (H). 
We can observe that our method also constantly outperforms other deep learning-based competitors by even improved margins than that of the ZSL scenario on all datasets. 
Specifically, in GZSL, most competitors can hardly achieve balanced performance in both seen and unseen classes due to the domain biased over/under-fittings problem. For example, in Zhu~\textit{et al.}~\cite{zhu2019semantic} and APNet~\cite{liu2020attribute}, there exist 49.5\% and 42.0\% margins between the accuracy of seen and unseen classes in AWA2 and aPY, respectively, thus the overall performance, i.e, Harmonic Mean, is significantly poor for real-world application. Moreover, even some of the most powerful competitors such as PSVMA~\cite{liu2023progressive} and DGZ~\cite{chen2023deconstructed} can still have a nonnegligible margin, i.e., 16.4\% and 12.3\% in SUN and AWA2, respectively. 
In contrast, due to the utilization of the proposed sample-wise composite semantic predictor and the constructed maximal margin geometry, our method can significantly relieve the domain biased over/under-fittings problem and obtains a more balanced performance in both seen and unseen classes.

% \begin{table}[htbp]
% \fontsize{9}{10}\selectfont
% \centering
% %\setlength{\tabcolsep}{1.4mm}{
% \begin{threeparttable}     
% \begin{tabular}{cccccccc}  
% \toprule  
% \multicolumn{2}{c}{\textbf{Module}}
% &\multicolumn{1}{c}{\textbf{AWA3}}
% &\multicolumn{1}{c}{\textbf{CUB-200}}
% &\multicolumn{1}{c}{\textbf{SUN}}
% &\multicolumn{1}{c}{\textbf{aPY}}\cr  
% \cmidrule(lr){1-2} 
% %\cmidrule(lr){5-5} \multirow{2}{*} \multicolumn{1}{|c|}{\multirow{2}{*}{\textbf{Method}}} 
% %\cmidrule(lr){6-6} 
% \textbf{SCLN}    &\textbf{MMG} &(\%)  &(\%) &(\%)  &(\%)  \cr
% \midrule  
% $\checkmark$   &              &67.2  &58.8 &51.4  &38.3 \cr
%                &$\checkmark$  &69.5  &67.2 &55.2  &40.1 \cr
% $\checkmark$   &$\checkmark$  &82.6  &80.3 &70.2  &48.7 \cr
% \bottomrule  
% \end{tabular}
% \caption{Ablation study}\label{ablation_study}  
% \end{threeparttable}
% %}
% \end{table}

% \begin{figure}[htbp]
%   \centering
% \centerline{\includegraphics[width=0.49\textwidth]{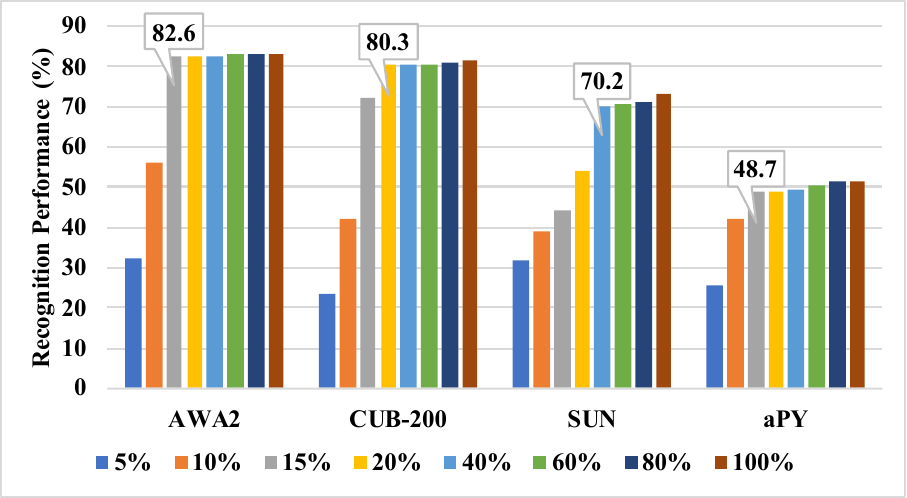}}
%   \caption{Sparseness analysis on AWA2, CUB-200, SUN, and aPY datasets. The optimal sparseness is 15\%, 20\%, 40\%, and 15\%, respectively (better viewed in color).}
%   \label{sparse}
% %\vspace{-5mm}
% \end{figure}

% \begin{figure}[h]
% %\vspace{-3.5mm}
% \centering
% \subfigure[2D Subspace Map]{\label{m3-original}
% \includegraphics[width=1.6in]{3class_2d.png}}
% %
% \subfigure[3D Subspace Map]{\label{m3-mapped}
% \includegraphics[width=1.6in]{3class_3d.png}}
% %\vspace{-3mm}
% \caption{Subspace Visualization.} 
% %\vspace{-3.5mm}
% \end{figure}

\subsection{Mapping Robustness}
To further demonstrate the effectiveness of our method in relieving the domain biased over/under-fittings problems, we visualize the raw and mapped features of samples from test unseen classes of AWA2 (10 unseens) and CUB-200 (50 unseens), respectively, using t-SNE~\cite{van2008visualizing} in Figure~\ref{tsne}. 
It can be observed that we can pose more subspaces between different classes with the obtained mapped features than the results with raw features. 
Specifically, Figure~\ref{m3-original} and Figure~\ref{m-original} are the visualizations of raw features of AWA2 and CUB-200, respectively, where most classes are clustered in panhandle subspaces. 
In contrast, Figure~\ref{m3-mapped} and Figure~\ref{m-mapped} are the visualizations of mapped features of AWA2 and CUB-200, respectively, by reusing the trained model based on only seen classes. 
It is obvious that the mapped feature space results in more subspaces for different classes, thus our method can be more separable and robust across both seen and unseen classes. 
%and meanwhile, significantly relieve the domain bias problem in ZSL/GZSL.
%is much more separable than the raw feature space, thus our method can be more robust to the gap between seen and unseen classes, and hence significantly relieve the domain bias problem in ZSL/GZSL.
%where we can see that the mapped features are 
%three randomly selected classes using their raw and mapped features, respectively, where we can see that the mapped features are nearly orthogonal from each other while the raw features are quite clustered. Figure~\ref{m-original} and Figure~\ref{m-mapped} visualize the results of all classes and we can observe that the mapped feature space is much more separable than the raw feature space. 
%
%Therefore, our method can be more robust to the gap between seen and unseen classes, and hence significantly relieve the domain bias problem in ZSL/GZSL.

\begin{table}[htbp]
\fontsize{9}{8}\selectfont
\centering
\begin{threeparttable}     
\begin{tabular}{cccccccc}  
\toprule  
\multicolumn{2}{c}{\textbf{Module}}
&\multicolumn{1}{c}{\textbf{AWA3}}
&\multicolumn{1}{c}{\textbf{CUB-200}}
&\multicolumn{1}{c}{\textbf{SUN}}
&\multicolumn{1}{c}{\textbf{aPY}}\cr  
\cmidrule(lr){1-2} 
%\cmidrule(lr){5-5} \multirow{2}{*} \multicolumn{1}{|c|}{\multirow{2}{*}{\textbf{Method}}} 
%\cmidrule(lr){6-6} 
\textbf{SCLN}    &\textbf{MMG} &(\%)  &(\%) &(\%)  &(\%)  \cr
\midrule  
$\checkmark$   &              &67.2  &58.8 &51.4  &38.3 \cr
               &$\checkmark$  &69.5  &67.2 &55.2  &40.1 \cr
$\checkmark$   &$\checkmark$  &82.6  &80.3 &70.2  &48.7 \cr
\bottomrule  
\end{tabular}
\caption{Ablation study}\label{ablation_study}  
\end{threeparttable}
%}
%\vspace{-5.5mm}
\end{table}

\begin{figure}[htbp]
  \centering
\centerline{\includegraphics[width=0.4\textwidth]{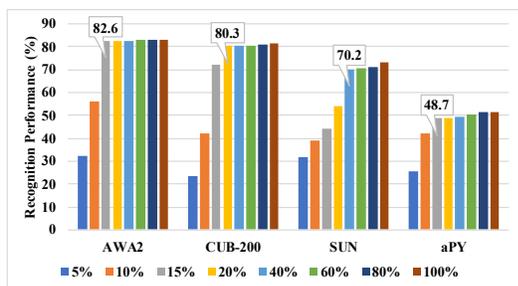}}
  \caption{Sparseness analysis on AWA2, CUB-200, SUN, and aPY datasets. The optimal sparseness is 15\%, 20\%, 40\%, and 15\%, respectively (better viewed in color).}
  \label{sparse}
%\vspace{-5.5mm}
\end{figure}

\subsection{Further Analysis}
We design two additional experiments to further analyze the impact of some key building blocks of our method. 

\noindent \textbf{Ablation Study.}
%\subsubsection{Ablation Study}
We consider three scenarios to verify the effectiveness of the parsimonious network design in ZSL performance, including 1) only the sample-wise composite linear networks (SCLN) are used; 2) only the maximal margin geometry (MMG) is used (with all base linear networks activated); and 3) full \textit{ParsNets} with both SCLN and MMG. The results are demonstrated in Table~\ref{ablation_study}. 
We can observe that by using only SCLN or MMG separately, the performance is mediocre across all datasets. However, if both SCLN and MMG are used to form the \textit{ParsNets}, our recognition performance is significantly improved with a large margin of 15.4\% in AWA2, 21.5\% in CUB-200, 18.8\% in SUN, and 10.4\% in aPY. 
Such an improvement fully demonstrates the effectiveness and rationality of our method.

\noindent \textbf{Sparseness.}
%\subsubsection{Sparseness}
The sample-wise indicators are another important criterion in our method, where we need to select top-$k$ indicators to activate $k$ (out of total $K$) base linear networks to form the sparse composite linear networks for each sample. 
We rank all variances in Eq.~\ref{final-indicators} and then enable $k$ ranges in \{10, 20, 30, 40, 80, 120, 160, 200\} during training, which corresponds to a sparsity of \{5\%, 10\%, 15\%, 20\%, 40\%, 60\%, 80\%, 100\%\} out of the total number of base linear networks. We repeat the running on all datasets and record the ZSL recognition accuracy of each sparsity in Figure~\ref{sparse}. 
It can be observed that as the number of base linear networks increases, the accuracy gradually improves at the very beginning phase. Soon, it reaches a stable phase where we can explore a trade-off between recognition accuracy and sparsity for each dataset. 
Specifically, we set $k=30$ (15\%) for AWA2, $k=40$ (20\%) for CUB-200, $k=80$ (40\%) for SUN, and $k=30$ (15\%) for aPY.  

% \begin{figure}[h]
%   \centering
% \centerline{\includegraphics[width=0.49\textwidth]{sparse.pdf}}
%   \caption{Sparseness analysis on AWA2, CUB-200, SUN, and aPY datasets. The optimal sparseness is 15\%, 20\%, 40\%, and 15\%, respectively (better viewed in color).}
%   \label{sparse}
% %\vspace{-5mm}
% \end{figure}

\section{Conclusion}
We proposed the \textit{ParsNets}, which is a novel parsimonious-yet-efficient ZSL/GZSL framework. 
%that can be deployed flexibly on resource-constrained devices and achieves equivalent or even better performance against deep models. 
%
Our method first refactors the visual-semantics mapping function into several local base linear networks to estimate the nonlinearity of complex deep models. 
With a constructed maximal margin geometry on the features by enforcing low-rank and high-rank constraints to intra-class and inter-class samples, respectively, our method encourages orthogonal subspaces for different classes which can enable a smooth knowledge transfer between seen/unseen classes.  
The sample-wise indicators are then employed to guarantee a sparse composition of base linear networks to further relieve the domain biased over/under-fittings problem, and meanwhile, enable a lightweight and on-device-friendly ZSL framework. 
Experimental results verified the effectiveness of our method.

\bibliography{my}
\bibliographystyle{icml2024}
\end{document}